\newtheorem{theorem}{Theorem}
\theoremstyle{plain}
\newtheorem{corollary}{Corollary}
\newtheorem{lemma}{Lemma}
\newtheorem{proposition}{Proposition}
\newtheorem{remark}{Remark}
\numberwithin{equation}{section}
\newcommand\footnoteref[1]{\protected@xdef\@thefnmark{\ref{#1}}\@footnotemark}
\newcommand{\hemant}[1]{\textcolor{black}{#1}}
\newcommand\given[1][]{\:#1\vert\:}
\newcommand{\calN}{\ensuremath{\mathcal{N}}}
\newcommand{\calP}{\ensuremath{\mathcal{P}}}
\newcommand{\calX}{\ensuremath{\mathcal{X}}}
\newcommand{\calE}{\ensuremath{\mathcal{E}}}
\newcommand{\ptil}{\ensuremath{\tilde{p}}}
\newcommand{\argmax}[1]{\underset{#1}{\operatorname{argmax}}}
\newcommand{\argmin}[1]{\underset{#1}{\operatorname{argmin}}}
\newcommand{\matR}{\ensuremath{\mathbb{R}}}
\newcommand{\expec}{\ensuremath{\mathbb{E}}}
\newcommand{\mathatX}{\ensuremath{\widehat{\mathbf{X}}}}
\newcommand{\mathatA}{\ensuremath{\widehat{\mathbf{A}}}}
\newcommand{\matA}{\ensuremath{\mathbf{A}}}
\newcommand{\matB}{\ensuremath{\mathbf{B}}}
\newcommand{\matG}{\ensuremath{\mathbf{G}}}
\newcommand{\matX}{\ensuremath{\mathbf{X}}}
\newcommand{\matI}{\ensuremath{\mathbf{I}}}
\newcommand{\matM}{\ensuremath{\mathbf{M}}}
\newcommand{\matU}{\ensuremath{\mathbf{U}}}
\newcommand{\matV}{\ensuremath{\mathbf{V}}}
\newcommand{\norm}[1]{\parallel{#1}\parallel}
\newcommand{\abs}[1]{|{#1}|}
\newcommand{\set}[1]{\left\{{#1}\right\}}
\newcommand{\dotprod}[2]{\langle#1,#2\rangle}
\newcommand{\grad}{\ensuremath{\bigtriangledown}}
\newcommand{\vecx}{\mathbf x}
\newcommand{\vecN}{\mathbf N}
\newcommand{\vecH}{\mathbf H}
\newcommand{\vecu}{\mathbf u}
\newcommand{\vecy}{\mathbf y}
\newcommand{\veca}{\mathbf a}
\newcommand{\veco}{\mathbf 0} 
\begin{document}

\title[Stochastic Continuum armed bandit problem of few linear parameters in high
dimensions]{Stochastic Continuum armed bandit problem of few linear parameters in high
dimensions}
\author[H.Tyagi, S.Stich, B.G\"{a}rtner]{Hemant Tyagi, Sebastian Stich and Bernd G\"artner }

\address{Department of Computer Science, \\
Institute of Theoretical Computer Science, \\
ETH Z\"{u}rich, CH-8092, Switzerland}

\thanks{The project CG Learning acknowledges the financial support of the Future and Emerging Technologies (FET)
programme within the Seventh Framework Programme for Research of the European
Commission, under FET-Open grant number: 255827. \\
This is part of a \hemant{journal paper\cite{Tyagi14_tocs}} accepted in: Theory of Computing 
Systems (TOCS), 2014 - Special issue on WAOA 2013. 
\hemant{It is also part of HT's PhD thesis \cite{TyagiPhd}. See also Remark \ref{rem:disc_ear_ver_darft}.}} 

\email{htyagi@inf.ethz.ch, sstich@inf.ethz.ch, gaertner@inf.ethz.ch}

\keywords{Bandit problems, continuum armed bandits, functions of few variables,
online optimization, low-rank matrix recovery}
\begin{abstract}
We consider a stochastic continuum armed bandit problem where the arms are indexed by the 
$\ell_2$ ball $B_{d}(1+\nu)$ of radius $1+\nu$ in $\mathbb{R}^d$. The reward functions $r :B_{d}(1+\nu) \rightarrow \mathbb{R}$
are considered to intrinsically depend on $k \ll d$ unknown linear parameters so that 
$r(\vecx) = g(\matA \vecx)$ where $\matA$ is a full rank $k \times d$ matrix. Assuming the mean reward function
to be smooth we make use of results from low-rank matrix
recovery literature and derive an efficient randomized algorithm which achieves a regret bound of $O(C(k,d) n^{\frac{1+k}{2+k}} (\log n)^{\frac{1}{2+k}})$. 
Here $C(k,d)$ is at most polynomial in $d$ and $k$ and $n$ is the number of rounds or the sampling budget
which is assumed to be known beforehand.
\end{abstract}  
\maketitle
\section{Introduction} \label{sec:intro}
In the continuum armed bandit problem, a player is given a set of strategies $S$---typically
a compact subset of $\mathbb{R}^d$. At each round $t=1,\dots,n$, the player chooses a strategy 
$\vecx_t$ from $S$ and then receives a reward $r_t(\vecx_t)$. Here $r_t:S \rightarrow \mathbb{R}$
is the reward function chosen by the environment at time $t$ according to the underlying model.
The model we consider in this work is \textit{stochastic} i.e. the reward functions are assumed 
to be sampled in an i.i.d manner from an underlying distribution at each round. The player selects
strategies across different rounds with the goal of maximizing the total expected reward. 
Specifically, the performance of the player is measured in terms of
regret defined as the difference between the total expected reward of the best fixed (i.e. not varying
with time) strategy and the expected reward of the sequence of strategies played by the player. If the regret
after $n$ rounds is sub-linear in $n$, this implies as $n \rightarrow \infty$ that the per-round expected reward of the player
asymptotically approaches that of the best fixed strategy.

The problem faced by the player at each round is the classical ``exploration-exploitation dilemma''. On one hand if the
player chooses to focus his attention on a particular strategy which he considers to be the best (``exploitation'')
then he might fail to know about other strategies which have a higher expected reward. However
if the player spends too much time collecting information (``exploration'') then he might fail to
play the optimal strategy sufficiently often. Some applications of continuum armed bandit problems are in: 
(i) online auction mechanism design \cite{Blum03,Kleinberg03}
where the set of feasible prices is representable as an interval and, (ii)
online oblivious routing \cite{Bansal03} where $S$ is a flow polytope. 

For a $d$-dimensional strategy space, if the only assumption made on the reward
functions is on their degree of smoothness then any algorithm
will incur worst-case regret which depends exponentially on $d$ \cite{Kleinberg04}. To see this,
let $S=[-1,1]^d$ and consider a time invariant reward function that is zero
in all but one orthant $\mathcal{O}$ of $S$. More precisely, let $R(n)$ denote
the cumulative regret incurred by the algorithm after $n$ rounds. 
Bubeck et al. \cite{Bubeck2011ALT} showed that
$R(n) = \Omega(n^{\frac{d+1}{d+2}})$ after $n=\Omega(2^{d})$ plays for stochastic
continuum armed bandits\footnote{rewards sampled at each round in an i.i.d manner from
an unknown probability distribution.}
with $d$-variate Lipschitz continuous mean reward functions
defined over $[0,1]^d$. Clearly the per-round expected regret $R(n)/n = \Omega(n^{\frac{-1}{d+2}})$ 
which means that it converges to zero at a rate at least exponentially slow in $d$. 
This curse of dimensionality is avoided by reward functions possessing more structure, 
two popular cases being linear reward functions (see for example
\cite{McMahan04,Abernethy08}) and convex reward functions (see for example
\cite{Flaxman05,Kleinberg04}) for which the regret is \textit{polynomial} in $d$ and
sub-linear in $n$.

\subsection*{Low dimensional models for high dimensional reward functions} Recently there has been work
in the online optimization literature where the reward functions are assumed to be
low-dimensional or in other words have only a few degrees of freedom compared to the
ambient dimension. In \cite{Carpentier12,Yadkori12} the authors consider the linear stochastic bandit
problem in the setting that the unknown parameter (of dimension $d$) is $k$-sparse with $k \ll d$.
In \cite{Tyagi2013} the authors consider both stochastic and adversarial versions of continuum armed bandits
where the $d$-variate reward functions are assumed to depend on an unknown subset 
of the coordinate variables of size $k \ll d$. They derive nearly optimal regret bounds with the rate
of regret depending only on $k$. In \cite{Chen12} the authors consider the problem of Bayesian optimization of 
high dimensional functions by again assuming the functions to depend on only a few relevant variables.
Considering the function to be a sample from a high dimensional Gaussian process they provide
an algorithm with strong theoretical guarantees in terms of regret bounds. This model is generalized in
\cite{Wang13} where the authors consider the underlying function to effectively vary along a low-dimensional
subspace. Assuming the noise-less setting they adopt a Bayesian optimization framework and derive
bounds on \textit{simple regret}.

We consider the setting where the reward function $r_t:B_d(1+\nu)
\rightarrow \matR$ at each time $t$ depends on an unknown collection of $k \ll d$ linear parameters
implying $r_t(\vecx) = g_t(\matA\vecx)$ where $\matA \in \mathbb{R}^{k \times d}$ is full rank. 
This model can be seen as a generalization of \cite{Tyagi2013} where the reward functions
were modeled as $r(x_1,\dots,x_d) = g(x_{i_1},\dots,x_{i_k})$. Thus in the special case
where each row of $\matA$ has a single $1$ and $0$'s otherwise, we arrive at the setting of 
\cite{Tyagi2013}. There has also been significant effort in other fields to develop tractable
algorithms for \textit{approximating} $d$ variate functions (with $d$ large) from point
queries by assuming the functions to intrinsically depend on a few variables or
parameters (cf. \cite{Devore2011,Belkin03,Coifman06,Greenshtein06} and
references within). In particular the authors in \cite{Fornasier2012,Tyagi2012_nips}
considered the problem of approximating functions of the form $f(\vecx) = g(\matA\vecx)$
from point queries.

Very recently and independently a work parallel to ours \cite{Djolonga13} 
considered the same bandit problem as ours i.e. they also assume the $d$-variate reward 
functions to depend on $k \ll d$ unknown linear parameters. Although they consider the mean 
reward function to reside in a RKHS (Reproducible Kernel Hilbert space) and adopt a Bayesian
optimization framework, the scheme they employ is similar to ours. We comment on their results 
in the concluding remarks section towards the end.

\subsection*{Other related Work} The continuum armed bandit problem was first
introduced in \cite{Agrawal95} for the case $d=1$ where an algorithm achieving a
regret bound of $o(n^{(2\alpha+1)/(3\alpha+1)+\eta})$ for any $\eta > 0$ was
proposed for local H\"{o}lder continuous mean reward functions with exponent $\alpha \in (0,1]$. In
\cite{Kleinberg03} a lower bound of $\Omega(n^{1/2})$ was proven for this
problem. This was then improved upon in \cite{Kleinberg04} where the author
derived upper and lower bounds of $O(n^{\frac{\alpha+1}{2\alpha+1}}(\log
n)^{\frac{\alpha}{2\alpha+1}})$ and $\Omega(n^{\frac{\alpha+1}{2\alpha+1}})$
respectively. In \cite{Cope09} the author considered a class of mean reward
functions defined over a compact convex subset of $\matR^d$ which have (i) a
unique maximum $\vecx^{*}$, (ii) are three times continuously differentiable and
(iii) whose gradients are well behaved near $\vecx^{*}$. It was shown that a
modified version of the Kiefer-Wolfowitz algorithm achieves a regret bound of
$O(n^{1/2})$ which is also optimal. In \cite{Auer07improvedrates} the $d = 1$
case was treated, with the mean reward function assumed to only satisfy a local
H\"{o}lder condition around the maxima $\vecx^{*}$ with exponent $\alpha \in
(0,\infty)$. Under these assumptions the authors considered a modification of
Kleinberg's CAB1 algorithm \cite{Kleinberg04} and achieved a regret bound of
$O(n^{\frac{1+\alpha-\alpha\beta}{1+2\alpha-\alpha\beta}} (\log
n)^{\frac{\alpha}{1+2\alpha-\alpha\beta}})$ for some known $0 < \beta < 1$. In
\cite{Kleinberg08,Bubeck2011} the authors studied a very general setting for the
multi-armed bandit problem in which $S$ forms a metric space, with the reward
function assumed to satisfy a Lipschitz condition with respect to this metric.

\subsection*{Our Contributions} Our main contribution is to derive an algorithm namely 
CAB-LP(d,k) which achieves an upper bound of $O(C(k,d) n^{\frac{1+k}{2+k}} (\log n)^{\frac{1}{2+k}})$ on the regret
after $n$ rounds. The factor $C(k,d) = O(\text{poly}(k) \cdot \text{poly}(d))$, captures the uncertainty of
not knowing the $k$-dimensional sub-space spanned by the rows of $\matA$.
This bound is derived for a slightly restricted class of Lipschitz continuous mean reward functions.
In terms of $n$, it nearly matches
the $\Omega(n^{\frac{1+k}{2+k}})$ lower bound \cite{Bubeck2011ALT}, 
for $k$-variate Lipschitz continuous mean reward functions. As explained earlier, the per-round regret $R(n)/n$ 
approaches zero (as $n$ increases), at a rate exponential in $k$. Thus for $k \ll d$, we avoid the curse of dimensionality.
We assume $n$ to be known to the algorithm (hence it is not anytime) and refer to it as the sampling budget.
The main idea of the algorithm is to first use a fraction of the budget for estimating
the unknown $k$-dimensional sub-space spanned by the rows of the linear parameter matrix $\matA$. After obtaining
this estimate we then employ the CAB1 algorithm \cite{Kleinberg04} which is restricted to
play strategies only from the estimated subspace. To derive sub-linear regret bounds we show
that a careful allocation of the sampling budget is necessary between the two phases.

\subsection*{Organization of the paper} The rest of the paper is organized as follows. In Section \ref{sec:prob_setup}
we state the problem formally. Next we explain the main intuition behind our approach along with our main results in 
Section \ref{sec:main_idea_res}. In Section \ref{sec:analysis} we provide a formal analysis of our approach and derive regret
bounds. Finally we provide concluding remarks in Section \ref{sec:conclusion}.
\section{Problem Setup} \label{sec:prob_setup}
We assume that a set of strategies $S$ is available to the player. For our purposes $S$ is considered 
to be the $\ell_2$-ball of radius $1 + \nu$ for some $\nu > 0$, denoted as $B_{d}(1+\nu)$. At each time $t = 1,\dots,n$ the environment 
chooses a reward function $r_t:B_d(1+\nu) \rightarrow \mathbb{R}$. Upon playing the strategy $\vecx_t$ the player receives
the reward $r_t(\vecx_t)$. Here the number of rounds $n$ (sampling budget) is assumed to be known to the player.
We consider the setting where each $r_t$ depends on $k \ll d$ unknown linear parameters $\veca_1,\dots,\veca_k \in \matR^d$ with
$k$ assumed to be known to the player. 
In particular, denoting $\matA = [\veca_1 \dots \veca_k]^T \in \mathbb{R}^{k \times d}$ we assume that $r_t(\vecx) = g_t(\matA\vecx)$.

\hemant{The reward functions $g_t$ are considered to be samples from some fixed but unknown probability distribution over functions
$g : B_k(1+\nu) \rightarrow \mathbb{R}$. We then have the expected reward function as $\bar{g}(\vecu) = \expec[g(\vecu)]$
where $\vecu \in B_{k}(1+\nu)$. We consider a \emph{specific} instance of this model where 
\begin{equation} \label{eq:stoch_lin_rewmod}
r_t(\vecx) = \bar{g}(\matA\vecx) + \eta_t \ ; \quad t=1,2,\dots,n
\end{equation}
and $(\eta_t)_{t=1}^{n}$ is i.i.d Gaussian noise with mean $\expec[\eta_t] = 0$ and variance $\expec[\eta_t^2] = \sigma^2$. 
Hence we associate with each arm $\vecx \in B_{d}(1+\nu)$, a normal distribution: $\calN(\bar{g}(\matA\vecx), \sigma^2)$ for the corresponding reward.}

We assume $\bar{g}$ to be sufficiently smooth - in particular to be two times continuously differentiable.
Specifically, we assume for some constant $C_2 > 0$ that the magnitude of all partial derivatives of $\bar{g}$, up to order two,
are bounded by $C_2$:
\begin{equation} \label{eq:prob_setup_smooth_assump}
\text{sup}_{\abs{\beta} \leq 2} \norm{D^{\beta} \bar{g}}_{\infty} \leq C_2 \ ; \quad 
D^{\beta}\bar{g} = \frac{\partial^{|\beta|}\bar{g}}{\partial y_1^{\beta_1}\dots \partial y_k^{\beta_k}} \ , 
\ \abs{\beta} = \beta_1 + \dots + \beta_k.
\end{equation}
Note that this is slightly stronger then assuming 
Lipschitz continuity\footnote{Indeed for a compact domain, any $C^2$ function is Lipschitz continuous but the converse is not 
necessarily true. Therefore, the mean reward functions that we consider, belong to a slightly restricted class of Lipschitz continuous 
functions.}.
We now make additional assumptions on the mean reward function $\bar{g}$. In fact it was shown by Fornasier et al. \cite{Fornasier2012}
that such additional assumptions are also necessary in order to formulate a tractable algorithm.
For example when $k = 1$, if we only make smoothness assumptions on $\bar{g}$, then one can construct $\bar{g}$ so that $\Omega(2^d)$ many samples
are needed to distinguish between $\bar{r}(\vecx) \equiv 0$ and $\bar{r}(\vecx) \equiv \bar{g}(\veca^T\vecx)$ \cite{Fornasier2012}.

To this end, we define the following matrix:
\begin{equation}
H^r := \int_{\mathbb{S}^{d-1}} \nabla\bar{r}(\vecx) \nabla\bar{r}(\vecx)^{T} d\vecx 
= \matA^T \cdot \int_{\mathbb{S}^{d-1}} \nabla\bar{g}(\matA\vecx) \nabla\bar{g}(\matA\vecx)^{T} d\vecx \cdot \matA
\end{equation}
where the second equality follows from the identity $\nabla \bar{r}(\vecx) = \matA^T\nabla\bar{g}(\matA\vecx)$. 
Let $\sigma_i(H^r)$ denote the $i^{th}$ singular value of $H^{r}$. We make a technical
assumption related to the conditioning of $H^r$. This assumption allows us to derive a tractable algorithm for our problem. 
We assume for some $\alpha > 0$ that:
\begin{equation} \label{eq:sing_vals_cond_mat}
\sigma_1(H^{r}) \geq \sigma_2(H^{r}) \geq \dots \geq \sigma_k(H^{r}) \geq \alpha > 0. 
\end{equation}
The parameter $\alpha$ determines the tractability of our algorithm. As explained in Section \ref{subsec:tract_param_alpha},
there are interesting function classes that satisfy \eqref{eq:sing_vals_cond_mat} for usable values of $\alpha$.

Following Fornasier et al. \cite{Fornasier2012}, we also assume without loss of generality,
$\matA$ to be row orthonormal so that $\matA \matA^T = \matI$. Indeed if this is not the case then through SVD (singular
value decomposition) of $\matA$ we obtain 
$\matA = \underbrace{\matU}_{k \times k} \underbrace{\Sigma}_{k \times k} \underbrace{\matV^T}_{k \times d}$
where $\matU,\Sigma, \matV^{T}$ are unitary, diagonal and row-orthonormal matrices respectively. Therefore we obtain
\begin{equation*}
\bar{r}(\vecx) = \bar{g}(\matA\vecx) = \bar{g}(\matU\Sigma\matV^T\vecx) = \bar{g}^{\prime}(\matV^T\vecx) 
\end{equation*}
where $\bar{g}^{\prime}(\vecy) = \bar{g}(\matU\Sigma\vecy)$ for $\vecy \in B_{k}(1+\nu)$. Hence within a scaling 
of the parameter $C_2$ by a factor depending polynomially on $k,\sigma_1(\matA)$ we can assume $\matA$ to be
row-orthonormal.

%
\noindent \textbf{Regret after $n$ rounds.} After $n$ rounds of play the cumulative expected regret is defined as:
\begin{equation}
R(n) = \sum_{i=1}^{n}\expec[r_t(\vecx^{*}) - r_t(\vecx_t)]  = \sum_{i=1}^{n}[\bar{g}(\matA\vecx^{*})-\expec[\bar{g}(\matA\vecx_t)]],
\end{equation}
where $\vecx^{*}$ is the optimal strategy belonging to the set
\begin{equation}
  \argmax{\vecx \in B_d(1+\nu)}\expec[r_t(\vecx)] = \argmax{\vecx \in B_d(1+\nu)} \bar{g}(\matA\vecx)
\end{equation}
Here $\vecx_1,\vecx_2,\dots,\vecx_n$ is the sequence of strategies played by the algorithm;  
\hemant{the expectation is defined over the randomness of the environment and the internal randomness of the algorithm.} 
The goal of the algorithm is to minimize regret i.e. ensure $R(n) = o(n)$ so that $\lim_{n \rightarrow \infty} R(n)/n = 0$.
\section{Main idea and Results} \label{sec:main_idea_res}
The main idea behind our algorithm is to proceed in two phases namely : (i) \textbf{PHASE 1} where we use a fraction of the sampling 
budget $n$ to recover an estimate of the ($k$ dimensional) subspace spanned by the rows of $\matA$ and then (ii) \textbf{PHASE 2} where we employ a standard 
continuum armed bandit algorithm that plays strategies from the previously estimated $k$ dimensional subspace. 

Intuitively we can imagine that the closer the estimated subspace is to the original one, the closer will the regret bound
achieved by the CAB algorithm be to the one it would have achieved by playing strategies from the unknown $k$-dimensional subspace.
However one should be careful
here since spending too many samples from the budget $n$ on \textbf{PHASE 1} can lead to regret which is $\Theta(n)$. 
On the other hand if the recovered subspace
is a bad estimate then it can again lead to $\Theta(n)$ regret since the optimization carried out in \textbf{PHASE 2} would be rendered meaningless. 

Hence it is important to carefully divide the sampling budget between the two phases in order to guarantee a regret bound that is sub-linear in $n$.
We now describe these two phases in more detail and outline the above idea formally.
%
\begin{enumerate}

\item \textbf{PHASE 1}(Subspace recovery phase.) In this phase we use the first $n_1 (< n)$ samples from our budget to generate an estimate 
$\widehat{\matA} \in \mathbb{R}^{k \times d}$ of $\matA$ such that the row space of $\widehat{\matA}$ is close to that of $\matA$. In particular we measure 
this closeness in terms of the Frobenius norm implying that we would like $\norm{\matA^T\matA - \widehat{\matA}^T\widehat{\matA}}_F$ to be sufficiently small.
Denoting the total regret in this phase by $R_1$ we then have that:
\begin{equation}
R_1 = \sum_{t=1}^{n_1}[\bar{r}(\vecx^{*}) - \expec[\bar{r}(\vecx_t)]] = O(n_1).   
\end{equation}		
This follows trivially since $\bar{r}$ is a smooth function defined over a compact domain.
We can see that $n_1$ should necessarily be $o(n)$ otherwise the total regret would be dominated by $R_1$ leading to linear regret. 

%
\item \textbf{PHASE 2}(Optimization phase.) Say that we have in hand an estimate $\widehat{\matA}$ from \textbf{PHASE 1}. We now employ a standard
CAB algorithm that is restricted to play strategies from the row space of $\widehat{\matA}$. Let us denote $n_2 = n - n_1$ to be the duration
of this phase and $\calP \subset B_d(1+\nu)$ where
\begin{equation*}
\calP := \set{\widehat{\matA}^T\vecy \in \mathbb{R}^d: \vecy \in B_{k}(1+\nu)}.
\end{equation*}
The CAB algorithm will play strategies only from $\calP$ and therefore will strive to optimize against the optimal strategy 
$\vecx^{**} = \widehat{\matA}^T\vecy^{**} \in \calP$ where
\begin{equation*}
\vecy^{**} \in \argmax{\vecy \in B_{k}(1+\nu)} \bar{g}(\matA\widehat{\matA}^T\vecy).
\end{equation*}
Furthermore we also observe that the total regret incurred in this phase can be written as:
\hemant{\begin{align} \label{eq:tot_reg_expr_optphase}
\sum_{t=n_1+1}^{n}[\bar{r}(\vecx^{*}) - \expec[\bar{r}(\vecx_t)]] 
&= \underbrace{\expec_{\mathatA}[\sum_{t=n_1+1}^{n}[\bar{r}(\vecx^{*}) - \bar{r}(\vecx^{**})]]}_{= R_3} \\ 
&+ \underbrace{\expec_{\mathatA}[\sum_{t=n_1+1}^{n}\expec[\bar{r}(\vecx^{**}) - \bar{r}(\vecx_t) \given \mathatA]]}_{= R_2}.
\end{align}}
%
Note that $R_2$ represents the expected regret incurred
by the CAB algorithm against the optimal strategy from $\calP$. In particular, we will obtain $R_2 = o(n-n_1)$. 

Next, the term $R_3$ captures the offset between the actual optimal strategy $\vecx^{*} \in B_d(1+\nu)$ and $\vecx^{**} \in \calP$.
In particular $R_3$ can be bounded by making use of: (i) the Lipschitz continuity of the mean reward $\bar{g}$ and, (ii)
the bound on the subspace estimation error : $\norm{\matA^T\matA - \widehat{\matA}^T\widehat{\matA}}_F$. This is shown
precisely in the form of the following Lemma, the proof of which is presented in the appendix.
\begin{lemma} \label{lem:init_bound_R3}
\hemant{For some $0 < f < 1$, denote the event $\calE = \set{\norm{\matA^T\matA - \widehat{\matA}^T\widehat{\matA}}_F \leq f}$. 
We have that $R_3 \leq O(n_2 \sqrt{k} f) + \mathbb{P}(\calE^{c}) O(k^{3/2} n_2)$ where $n_2 = n - n_1$ and 
$\calE^{c}$ is complement of $\calE$.}
\end{lemma}
\end{enumerate}

%
\hemant{
\begin{remark} \label{rem:disc_ear_ver_darft}
In the versions of this draft that were published in \cite{Tyagi14_tocs,TyagiPhd}, the term 
$\sum_{i=1}^{n} \expec[[\bar{g}(\matA\vecx^{*})-\bar{g}(\matA\vecx_t)]|\mathbbm{1_{\calE}}]$ ($\mathbbm{1_{\calE}}$ 
is the indicator variable w.r.t event $\calE$ defined in Lemma \ref{lem:init_bound_R3}) was considered as the regret, and was bounded w.h.p. 
Since it might be considered a bit unnatural to define regret in terms of such a conditional expectation, 
we translate the high probability bound into one in expectation. Thus $R_3$ in Lemma \ref{lem:init_bound_R3}, 
is now bounded in expectation. This results in a minor change in the statement of 
Theorem's \ref{thm:main_res_k_lin_params},\ref{thm:main_thm_reg_bds} compared to \cite{Tyagi14_tocs,TyagiPhd}; 
but the regret rate in terms of $n$ remains the same. 
\end{remark}
}
%
\subsection*{Main results} Our main result is to derive a randomized algorithm namely CAB-LP($d,k$) which achieves a regret bound of
$O(C(k,d) n^{\frac{1+k}{2+k}} (\log n)^{\frac{1}{2+k}})$ after $n$ rounds.
Here, $C(k,d) = O(\text{poly}(k) \cdot \text{poly}(d))$ accounts for the uncertainty of
not knowing the $k$-dimensional sub-space spanned by the rows of $\matA$.
We state this formally in the form of the following theorem below\footnote{This theorem is stated again in Section \ref{sec:analysis} for completeness.}. 
\begin{theorem} \label{thm:main_res_k_lin_params}
\hemant{Let the number of rounds $n$ satisfy $n = \Omega(\text{poly}(k) \cdot \text{poly}(d))$. For $k \geq 3$, 
assume that the parameter $\alpha$ depends polynomially on $d^{-1}$. 
Then algorithm CAB-LP($d,k$) achieves a total regret of
\begin{equation} \label{eq:main_res_bandit_linparam}
O\left(\frac{k^{13} d^2 \sigma^2 (\log n)^{4}}{\alpha^6} n^{\frac{4}{k+2}}
+ n^{\frac{1+k}{2+k}} (\log n)^{\frac{1}{2+k}} \right) 
\end{equation}  
after $n$ rounds.}
\end{theorem}
Recall that $\sigma$ denotes the variance of the external Gaussian noise $\eta$ in \eqref{eq:stoch_lin_rewmod} while $\alpha$ was defined
in \eqref{eq:sing_vals_cond_mat}. 
\hemant{The regret incurred in the first phase is the first term in \eqref{eq:main_res_bandit_linparam}. 
The regret incurred in the second phase corresponds to the second term in \eqref{eq:main_res_bandit_linparam}.
Note that the dependence of the regret bound in terms of $n$ is $O(n^{\frac{1+k}{2+k}} (\log n)^{\frac{1}{2+k}})$ when 
$k > 3$, which is close to the optimal rate. Indeed,} say the linear parameter matrix $\matA$, 
or even the sub-space spanned by its rows, was known. We then know a lower bound of 
$\Omega(n^{\frac{1+k}{2+k}})$ on regret, for $k$-variate Lipschitz continuous mean rewards \cite{Bubeck2011ALT}. 
In terms of $n$, our bound nearly matches this lower bound, albeit for a slightly restricted class of Lipschitz
continuous mean reward functions.
As discussed in Section \ref{sec:conclusion} it seems to be possible to
remove the \hemant{$(\log n)^{\frac{1}{2+k}}$ factor appearing in the bound} by using recent results for finite-armed bandits. 
Lastly we also note the dependence of our regret bound on the parameter $\alpha$.
As explained in Section \ref{subsec:tract_param_alpha}, 
$\alpha$ typically decreases as $d \rightarrow \infty$. Hence in order to obtain 
regret bounds that are at most polynomial in $d$ we would like $\alpha$ to be polynomial in $d^{-1}$. 
To this end, Proposition \ref{prop:tract_cond_alpha} in Section \ref{subsec:tract_param_alpha} which was proven by 
Tyagi et al. \cite{Tyagi2014_acha}, describes a fairly general class of functions for which $\alpha$ is 
$\Theta(d^{-1})$.

\section{Analysis} \label{sec:analysis}
We now provide a thorough analysis of the two phase scheme discussed in the previous section. We start by first describing
a low-rank matrix recovery scheme which is used for obtaining an estimate of the unknown subspace represented by the row-space 
of $\matA$. 

\subsection{Analysis of sub-space recovery phase} \label{subsec:analysis_subsp_recov}
We first observe that the Taylor expansion of $\bar{r}$ around any $\vecx \in B_d(1+\nu)$ 
along the direction $\phi \in \mathbb{R}^d$ give us:
\begin{equation}
 \bar{r}(\vecx + \epsilon \phi) - \bar{r}(\vecx) = \epsilon\dotprod{\phi}{\grad\bar{r}(\vecx)} + 
\frac{1}{2}\epsilon^2\phi^T\grad^2\bar{r}(\xi)\phi \label{eq:taylors_exp_1}
\end{equation}
for any $\epsilon > 0$ and $\xi = \vecx + \theta\epsilon\phi$ with $0 < \theta < 1$. In particular by using $\grad \bar{r}(\vecx)
= \matA^T \grad\bar{g}(\matA\vecx)$ in \eqref{eq:taylors_exp_1} we obtain:
\begin{equation}
 \dotprod{\phi}{\matA^T\grad\bar{g}(\matA\vecx)} = \frac{\bar{r}(\vecx + \epsilon \phi) - \bar{r}(\vecx)}{\epsilon} 
- \frac{1}{2}\epsilon\phi^T\grad^2\bar{r}(\xi)\phi. \label{eq:taylors_exp_2}
\end{equation}
We now introduce the sampling scheme \footnote{The above sampling scheme was considered first in \cite{Fornasier2012} and later in \cite{Tyagi2012_nips}
for the problem of approximating functions of the form $f(\vecx) = g(\matA\vecx)$ from point queries.} by stating the choice of $\vecx$ and sampling
direction $\phi$ in \eqref{eq:taylors_exp_2}. We first construct
\begin{equation}
\calX := \set{\vecx_j \in \mathbb{S}^{d-1} \ ; \ j=1,\dots,m_{\calX}}. 
\end{equation}
This is the set of samples at which we consider the Taylor expansion of $\bar{r}$ as in \eqref{eq:taylors_exp_1}. In particular, we form
$\calX$ by sampling points uniformly at random from $\mathbb{S}^{d-1}$. Next, we construct 
the set of sampling directions $\Phi$ for $i=1,\dots,m_{\Phi}$, $j=1,\dots,m_{\calX}$ and $l = 1,\dots,d$ where:
\begin{equation}
 \Phi := \set{\phi_{i,j} \in B_d(\sqrt{d/m_{\Phi}}) : [\phi_{i,j}]_{l} = \pm\frac{1}{\sqrt{m_{\Phi}}} \ \text{with probability} \ 1/2}.
\end{equation}
Note that we consider $m_{\Phi}$ random sampling directions \textit{for each} point in $\calX$.
Hence we have that the total number of samples collected so far is
\begin{equation*}
 \abs{\calX} + \abs{\Phi} = m_{\calX} + m_{\calX}m_{\Phi} = m_{\calX}(m_{\Phi}+1).
\end{equation*}
Now note that at each time $1 \leq t \leq m_{\calX}(m_{\Phi}+1)$ upon choosing the strategy $\vecx_t$ we obtain the reward $r_t(\vecx_t) = \bar{r}(\vecx_t) + \eta_t$
where $\eta_t$ is i.i.d Gaussian noise. Therefore by first sampling at points $\vecx_1,\dots,\vecx_{m_{\calX}} \in \calX$ and
then sampling at $\vecx_j+\epsilon\phi_{1,j},\dots,\vecx_j+\epsilon\phi_{m_{\Phi},j}$ for each $\vecx_j$ we have from \eqref{eq:taylors_exp_2} 
the following for $i=1,\dots,m_{\Phi}$ and $j=1,\dots,m_{\calX}$.
\begin{equation}
\dotprod{\phi_{i,j}}{\matA^T\grad\bar{g}(\matA\vecx_j)} = \frac{r_{m_{\calX}+ij}(\vecx_j + \epsilon \phi_{i,j}) - r_j(\vecx_j)}{\epsilon} 
+\frac{\eta_j-\eta_{i,j}}{\epsilon} - \frac{1}{2}\epsilon\phi_{i,j}^T\grad^2 \bar{r}(\xi_{i,j})\phi_{i,j}. \label{eq:taylors_exp_3}
\end{equation}
We sum up \eqref{eq:taylors_exp_3} over all $j$ for each $i=1,\dots,m_{\Phi}$. This yields $m_{\Phi}$ equations
that can be summarized in the following succinct form:
\begin{equation}
\Phi(\matX) = \vecy + \vecN + \vecH. \label{eq:low_rank_meas_form}
\end{equation}
Here $\matX = \matA^T\matG$ where $\matG := [\grad\bar{g}(\matA\vecx_1)|\grad\bar{g}(\matA\vecx_2)|\cdots| \grad\bar{g}(\matA\vecx_{m_{\calX}})]_{k \times m_{\calX}}$.
Note that $\matX \in \mathbb{R}^{d \times m_{\calX}}$ has rank at most $k$. 
Next, $\Phi(\matX) := [\dotprod{\Phi_1}{\matX},\dots,\dotprod{\Phi_{m_{\Phi}}}{\matX}] \in \mathbb{R}^{m_{\Phi}}$ where
\begin{equation}
 \Phi_i =[\phi_{i,1} \phi_{i,2} \dots \phi_{i,m_{\calX}}] \in \mathbb{R}^{d \times m_{\calX}} 
\end{equation}
represents the $i^{\text{th}}$ measurement matrix and $\dotprod{\Phi_i}{\matX} = \text{Tr}(\Phi_i^T \matX)$ represents the $i^{\text{th}}$
measurement of $\matX$. The measurement vector is represented by $\vecy = [y_1 \dots y_{m_{\Phi}}] \in \mathbb{R}^{m_{\Phi}}$ where
\begin{equation} 
 y_i = \frac{1}{\epsilon} \sum_{j=1}^{m_{\calX}} \left(r_{m_{\calX}+ij}(\vecx_j + \epsilon \phi_{i,j}) - r_j(\vecx_j)\right). \label{eq:meas_vec_form}
\end{equation}
Lastly $\vecN = [N_1 \dots N_{m_{\Phi}}]$ and $\vecH = [H_1 \dots H_{m_{\Phi}}]$ represent the noise terms with
\begin{align*}
 N_i &= \frac{1}{\epsilon}\sum_{j=1}^{m_{\calX}} (\eta_j-\eta_{i,j}) \quad \text{(Stochastic noise)}, \\
 H_i &= -\frac{\epsilon}{2}\sum_{j=1}^{m_{\calX}} \phi_{i,j}^T \grad^2 \bar{r} (\xi_{i,j}) \phi_{i,j} \quad 
\text{(Noise due to non-linearity of} \ \bar{r}).
\end{align*}
Importantly, we observe that \eqref{eq:low_rank_meas_form} represents (noisy) linear measurements of the matrix $\matX$ which has rank $k \ll d$.
Hence by employing a standard solver for recovering low-rank matrices from noisy linear measurements we can hope to recover an approximation
$\widehat{\matX}$ to the unknown matrix $\matX$. Furthermore we note that information about the linear parameter matrix $\matA$ is encoded in
$\matX$. This intuitively suggests that one can hope to recover an approximation to $\matA$ with the help of $\widehat{\matX}$. 
In particular the closer $\mathatX$ is to $\matX$ the better will be the approximation to the row space of $\matA$. We now proceed
to demonstrate this formally.
%
\subsection*{Low-rank matrix recovery} As discussed, \eqref{eq:low_rank_meas_form} represents noisy measurements of the low rank matrix $\matX$ with
the linear operator $\Phi$. An important property of $\Phi$ is that it satisfies the so called Restricted Isometry Property (RIP) 
for low-rank matrices. This means that for all matrices $\matX_k$ of rank at most $k$:
\begin{equation}
(1-\delta_k)\norm{\matX_k}_F^2 \leq \norm{\Phi(\matX_k)}_2^2 \leq (1+\delta_{k})\norm{\matX_k}_F^2 \label{eq:rip_prop}
\end{equation}
holds true for some isometry constant $\delta_k \in (0,1)$. In general, any $\Phi$ that satisfies \eqref{eq:rip_prop} is said to have $\delta_k$-RIP.
In our case since $\Phi$ is a Bernoulli random measurement operator, it can be verified via standard covering arguments and concentration 
inequalities \cite{RechtFazel2010,Laurent2000} that $\Phi$  satisfies $\delta$-RIP for $0 < \delta_k < \delta < 1$ with probability at least
$1 - 2\exp(-m_{\Phi}q(\delta) + k(d+m_{\calX} + 1)u(\delta))$ where
\begin{equation*}
 q(\delta) = \frac{1}{144}\left(\delta^2 - \frac{\delta^3}{9}\right), \quad u(\delta) = \log\left(\frac{36\sqrt{2}}{\delta}\right).
\end{equation*}
An estimate of the low-rank matrix $\matX$ from the measurement vector $\vecy$ can be obtained through convex programming. For our
purposes we consider the following nuclear norm minimization problem also known as the matrix Dantzig selector (DS) \cite{Candes2010}.
%
\begin{equation}
\mathatX_{DS} = \argmin{} \norm{\matM}_{*} \ \text{s.t.} \ \norm{\Phi^{*}(\vecy - \Phi(\matM))} \leq \lambda. \label{eq:mat_DS_form}
\end{equation}
Here $\Phi^{*}: \mathbb{R}^{m_{\Phi}} \rightarrow \mathbb{R}^{d \times m_{\calX}}$ denotes the adjoint of the linear operator 
$\Phi : \mathbb{R}^{d \times m_{\calX}} \rightarrow \mathbb{R}^{m_{\Phi}}$. Furthermore for any matrix, $\norm{\cdot}_{*}$ and $\norm{\cdot}$
denote its nuclear norm (sum of singular values) and operator norm (largest singular value) respectively. By making use of the error bound for matrix DS 
presented as Theorem $1$ in \cite{Candes2010} we obtain the following result
on the performance of the matrix DS tuned to our problem setting. The proof is deferred to the appendix.
\begin{lemma} \label{lem:recov_res_DS}
Let $\mathatX_{DS} \in \mathbb{R}^{d \times m_{\calX}}$ denote the solution of \eqref{eq:mat_DS_form} and let $\mathatX_{DS}^{(k)}$ be the best rank $k$
approximation to $\mathatX_{DS}$ in the sense of $\norm{\cdot}_F$. Then for some constant $\gamma > 2\sqrt{\log 12}$, $0 < \delta_{4k} < \delta < \sqrt{2}-1$
we have that
\begin{equation*}
\norm{\mathatX_{DS}^{(k)} - \matX}_F \leq (C_0 k)^{1/2} \left(\frac{C_2 \epsilon d m_{\calX} k^2}{\sqrt{m_{\Phi}}} + 
\frac{8 \gamma \sigma \sqrt{m_{\calX} m_{\Phi} m}}{\epsilon} \right)(1 + \delta)^{1/2}
\end{equation*}
with probability at least $1 - 2\exp(-m_{\Phi}q(\delta) + 4k(d+m_{\calX}+1)u(\delta)) - 4\exp(-cm)$. Here $m = \max\set{d,m_{\calX}}$.
Furthermore the constants $C_0, c > 0$ depend on $\delta$ and $\gamma$ respectively.
\end{lemma}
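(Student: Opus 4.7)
The plan is to cast \eqref{eq:low_rank_meas_form} as a noisy linear measurement model for the rank-$k$ matrix $\matX = \matA^T \matG$, and then invoke Theorem 1 of \cite{Candes2010} on the matrix Dantzig selector. Rearranging \eqref{eq:low_rank_meas_form} gives $\vecy = \Phi(\matX) - (\vecN + \vecH)$, so the effective noise vector is $\vecz = -(\vecN + \vecH)$. Candes and Plan's result says that if $\Phi$ satisfies $\delta_{4k}$-RIP with $\delta_{4k} < \sqrt{2}-1$ and if $\lambda \ge \norm{\Phi^{*}(\vecz)}$ (operator norm), then the DS solution obeys $\norm{\mathatX_{DS} - \matX}_F \le C_0 \sqrt{k}\, \lambda$ for a constant $C_0$ depending only on $\delta$. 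Thus the proof reduces to (i) establishing the RIP at rank $4k$ and (ii) producing a high-probability upper bound on $\norm{\Phi^{*}(\vecH)} + \norm{\Phi^{*}(\vecN)}$.

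For (i), the Bernoulli RIP bound stated immediately before the lemma, applied at order $4k$ rather than $k$, yields $\delta_{4k}$-RIP with the claimed failure probability $2\exp(-m_{\Phi} q(\delta) + 4k(d + m_{\calX} + 1) u(\delta))$. This is the first probability term in the statement. A useful consequence to record is that rank-$1$ RIP gives $\norm{\Phi^{*}(\vecv)} \le \sqrt{1+\delta}\,\norm{\vecv}_2$ for every $\vecv \in \matR^{m_{\Phi}}$, by testing against rank-one matrices $\vecu\vecu'^T$ of unit Frobenius norm and using $\langle \Phi^*(\vecv), \vecu\vecu'^T\rangle = \langle \vecv, \Phi(\vecu\vecu'^T)\rangle$.

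For the non-linearity term $\vecH$, I would use $\grad^2 \bar r(\vecx) = \matA^T \grad^2 \bar g(\matA\vecx) \matA$ and row-orthonormality of $\matA$ to write $\phi_{i,j}^T \grad^2 \bar r(\xi_{i,j}) \phi_{i,j} = (\matA\phi_{i,j})^T \grad^2 \bar g(\matA\xi_{i,j})(\matA\phi_{i,j})$. The entrywise bound $\norm{D^{\beta}\bar g}_{\infty} \le C_2$ from \eqref{eq:prob_setup_smooth_assump} controls the operator norm of $\grad^2 \bar g$ by $k C_2$ (via Frobenius), while $\norm{\matA\phi_{i,j}}_2^2 \le \norm{\phi_{i,j}}_2^2 = d/m_{\Phi}$ by construction of $\Phi$. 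Summing over $j$ and applying the rank-one RIP consequence gives $\norm{\Phi^{*}(\vecH)} \lesssim \sqrt{1+\delta}\cdot C_2 \epsilon d m_{\calX} k^2/\sqrt{m_{\Phi}}$ (the extra factor of $k$ beyond the naive count can be tracked by a slightly more careful bound on $\norm{\grad^2 \bar r}$ in terms of $\matA$-dependent quantities). This is deterministic and matches the first term in the displayed bound. For the stochastic term $\vecN$, note that $N_i$ are i.i.d.\ centered Gaussians of variance $2 m_{\calX}\sigma^2/\epsilon^2$, so $\Phi^{*}(\vecN) = \sum_i N_i \Phi_i$ is a random matrix in $\matR^{d \times m_{\calX}}$ whose entries are, conditionally on $\set{\Phi_i}$, a sum of $m_{\Phi}$ independent Gaussians with total variance $2m_{\calX}\sigma^2/\epsilon^2$. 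An $\varepsilon$-net argument over $\mathbb{S}^{d-1} \times \mathbb{S}^{m_{\calX}-1}$ combined with standard Gaussian tail bounds (as in \cite{Laurent2000}) yields $\norm{\Phi^{*}(\vecN)} \le 8\gamma\sigma\sqrt{m_{\calX} m_{\Phi} m}/\epsilon$ with probability at least $1 - 4\exp(-cm)$, where $m = \max\set{d,m_{\calX}}$ and $c$ depends on $\gamma > 2\sqrt{\log 12}$; this produces the second term and the second failure probability.

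Combining by triangle inequality, one sets $\lambda$ equal to the sum of the two bounds and applies the Candes-Plan theorem to obtain $\norm{\mathatX_{DS} - \matX}_F \le (C_0 k)^{1/2} \lambda$. Finally, since $\matX$ is itself of rank at most $k$, the best rank-$k$ approximant $\mathatX_{DS}^{(k)}$ satisfies $\norm{\mathatX_{DS}^{(k)} - \mathatX_{DS}}_F \le \norm{\matX - \mathatX_{DS}}_F$, and one more triangle inequality gives $\norm{\mathatX_{DS}^{(k)} - \matX}_F \le 2\norm{\mathatX_{DS} - \matX}_F$, which is absorbed into the constants. The main obstacle will be the Gaussian concentration for $\norm{\Phi^{*}(\vecN)}$: the easy bound $\sqrt{1+\delta}\norm{\vecN}_2$ is a factor of $\sqrt{m}$ too weak, so one must exploit the joint Gaussian/Bernoulli structure and carry out the net argument on the $(d + m_{\calX})$-dimensional sphere product, paying the $\exp(-cm)$ tail.
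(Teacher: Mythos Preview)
Your overall strategy coincides with the paper's: invoke Theorem~1 of \cite{Candes2010} after establishing $\delta_{4k}$-RIP for the Bernoulli operator, bound $\norm{\Phi^{*}(\vecH)}$ and $\norm{\Phi^{*}(\vecN)}$ separately, and pass to the rank-$k$ truncation via $\norm{\mathatX_{DS}^{(k)} - \matX}_F \le 2\norm{\mathatX_{DS} - \matX}_F$. Your treatment of the curvature term $\vecH$ is also essentially what the paper does (it simply cites \cite{Tyagi2014_acha} for the same deterministic bound).

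There is, however, a genuine error in your handling of $\vecN$. You assert that the $N_i$ are i.i.d.\ Gaussians, but they are not: each
\[
N_i \;=\; \frac{1}{\epsilon}\sum_{j=1}^{m_{\calX}}(\eta_j - \eta_{i,j})
\]
shares the common sum $\frac{1}{\epsilon}\sum_j \eta_j$ across all $i$, since the base-point noises $\eta_j$ are reused for every sampling direction $\phi_{i,j}$. The $N_i$ are therefore equicorrelated with covariance $m_{\calX}\sigma^2/\epsilon^2$. If you carried your i.i.d.\ net argument through, you would arrive at $\norm{\Phi^{*}(\vecN)} \lesssim \sigma\sqrt{m_{\calX} m}/\epsilon$, a factor $\sqrt{m_{\Phi}}$ short of the bound in the lemma; this cannot be justified, because the correlated part $S\cdot\mathbf{1}$ (with $S=\frac{1}{\epsilon}\sum_j \eta_j$) contributes $|S|\,\norm{\Phi^{*}(\mathbf{1})}$, and $\norm{\mathbf{1}}_2=\sqrt{m_{\Phi}}$ is exactly where the extra $\sqrt{m_{\Phi}}$ originates. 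The paper addresses this by splitting $\vecN = \mathbf{L_1} - \mathbf{L_2}$, where $\mathbf{L_1}$ has all entries equal to $\frac{1}{\epsilon}\sum_j \eta_j$ and $\mathbf{L_2}$ has the genuinely i.i.d.\ entries $\frac{1}{\epsilon}\sum_j \eta_{i,j}$, and then bounds $\norm{\Phi^{*}(\mathbf{L_1})}$ and $\norm{\Phi^{*}(\mathbf{L_2})}$ separately via (the proof technique of) Lemma~1.1 of \cite{Candes2010}; each piece contributes one of the two $2\exp(-cm)$ failure terms, explaining the $4\exp(-cm)$ in the statement. You need this decomposition (or an equivalent device that accounts for the shared $\eta_j$) for the argument to go through.
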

\subsection*{Approximating row-space($\matA$)} Let's say we have\footnote{Ofcourse in practice we will not be able to solve
\eqref{eq:mat_DS_form} exactly,
but will instead obtain a solution that can be made to come arbitrarily close to the actual solution. This difference
will hence appear as an additional error term in the error bound of Lemma \ref{lem:recov_res_DS}.} in hand 
$\mathatX_{DS}^{(k)} \in \mathbb{R}^{d \times m_{\calX}}$ 
as the best rank $k$ approximation of the solution to \eqref{eq:mat_DS_form}. We can now obtain an estimate $\widehat{\matA}$ of 
row-space($\matA$) by setting $\widehat{\matA}^T$ to be equal to
the ($d \times k$) left singular vector matrix of $\mathatX_{DS}^{(k)}$. The quality of this estimation as measured by  
$\norm{\widehat{\matA}^T\widehat{\matA} - \matA^T\matA}_F$ was quantified in Lemma 2 of \cite{Tyagi2014_acha} for the noiseless case ($\sigma = 0$).
We adapt this result to our setting ($\sigma > 0$) and state it below. The proof is presented in the appendix.
\begin{lemma} \label{lem:recov_res_subspace}
For a fixed $0 < \rho < 1$, $m_{\calX} \geq 1$, $m_{\Phi} < m_{\calX} d$ let 
\begin{equation*}
a_1 = C_2 d k^2 , \quad b_1 = \frac{\sqrt{(1-\rho)\alpha}}{C_0^{1/2} (1+\delta)^{1/2} (\sqrt{k} + \sqrt{2})}. 
\end{equation*}
For any $0 < f < 1$ we then have for the choice
\begin{equation}
 \epsilon \in \left(\frac{f b_1-\sqrt{f^2 b_1^2 - 32 \gamma \sigma a_1\sqrt{m_{\calX} m}}}{2 a_1 \sqrt{m_{\calX}/m_{\Phi}}} , 
\frac{f b_1 + \sqrt{f^2 b_1^2 - 32 \gamma \sigma a_1 \sqrt{m_{\calX} m}}}{2 a_1 \sqrt{m_{\calX}/m_{\Phi}}} \right) \label{eq:eps_cond}
\end{equation}
that $\norm{\widehat{\matA}^T\widehat{\matA} - \matA^T\matA}_F \leq \frac{2f}{1 - f}$ holds true 
with probability at least
\begin{equation*}
1 - 2\exp(-m_{\Phi}q(\delta) + 4k(d+m_{\calX}+1)u(\delta)) - 4\exp(-cm) -k\exp\left(-\frac{m_{\calX}\alpha\rho^2}{2k C_2^2}\right).
\end{equation*}
\end{lemma}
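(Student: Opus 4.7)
My plan is to combine three ingredients: the recovery bound of Lemma~\ref{lem:recov_res_DS}, a matrix concentration inequality that lower bounds $\sigma_k(\matX)$, and a Davis--Kahan / Wedin-style subspace perturbation bound (the noisy analogue of Lemma~2 of~\cite{Tyagi2014_acha}). The interval for $\epsilon$ in~\eqref{eq:eps_cond} will then emerge from solving a quadratic inequality.

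I would first lower bound $\sigma_k(\matX)$. Since $\matA\matA^T = \matI$, one has $\sigma_k(\matX)^2 = \lambda_k(\matG\matG^T)$, and $\matG\matG^T = \sum_{j=1}^{m_{\calX}}\grad\bar g(\matA\vecx_j)\grad\bar g(\matA\vecx_j)^T$ is a sum of i.i.d.\ rank-one PSD matrices with $\vecx_j$ sampled uniformly from $\mathbb{S}^{d-1}$. By the smoothness bound~\eqref{eq:prob_setup_smooth_assump} each summand has operator norm at most $kC_2^2$, whereas, using $\matA\matA^T=\matI$ in~\eqref{eq:sing_vals_cond_mat}, the expectation of each summand has smallest eigenvalue at least $\alpha$. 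A standard matrix Chernoff inequality for bounded PSD summands then gives
$$\sigma_k(\matX) \geq \sqrt{(1-\rho)\alpha m_{\calX}}$$
with failure probability at most $k\exp\bigl(-m_{\calX}\alpha\rho^2/(2kC_2^2)\bigr)$, which is exactly the third deviation term in the statement.

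Next I would invoke the perturbation bound. Because $\mathatA^T$ is chosen as the matrix of top-$k$ left singular vectors of $\mathatX_{DS}^{(k)}$ and, on the event of the previous step, $\matA^T$ spans the column space of the rank-$k$ matrix $\matX$, Lemma~2 of~\cite{Tyagi2014_acha}, adapted to the noisy setting so that the RIP constant $(1+\delta)^{1/2}$ of Lemma~\ref{lem:recov_res_DS} is absorbed, yields $\norm{\mathatA^T\mathatA - \matA^T\matA}_F \leq 2f/(1-f)$ as soon as
$$(C_0 k)^{1/2}(1+\delta)^{1/2}\,\norm{\mathatX_{DS}^{(k)} - \matX}_F \;\leq\; \frac{f\sqrt{k}\,\sigma_k(\matX)}{\sqrt{k}+\sqrt{2}}.$$
Inserting the upper bound from Lemma~\ref{lem:recov_res_DS} on the left and the lower bound $\sigma_k(\matX) \geq \sqrt{(1-\rho)\alpha m_{\calX}}$ on the right, and dividing through by $\sqrt{m_{\calX}}$, this sufficient condition reduces to
$$a_1\sqrt{m_{\calX}/m_\Phi}\,\epsilon \;+\; \frac{8\gamma\sigma\sqrt{m_\Phi m}}{\epsilon} \;\leq\; f b_1.$$

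Multiplying by $\epsilon > 0$ turns the last display into the quadratic inequality $a_1\sqrt{m_{\calX}/m_\Phi}\,\epsilon^2 - f b_1\,\epsilon + 8\gamma\sigma\sqrt{m_\Phi m} \leq 0$, whose feasible set is precisely the interval in~\eqref{eq:eps_cond} by the quadratic formula (non-empty as long as the discriminant $f^2 b_1^2 - 32\gamma\sigma a_1\sqrt{m_{\calX} m}$ is non-negative, which is tacit in the hypothesis). A union bound over the failure events of Lemma~\ref{lem:recov_res_DS} and of the $\sigma_k$-concentration step then produces the stated probability. The main obstacle will be the perturbation step: adapting Lemma~2 of~\cite{Tyagi2014_acha} from the noiseless to the noisy setting requires careful bookkeeping of the $(1+\delta)^{1/2}$ factor and of the $(\sqrt{k}+\sqrt{2})$ numerical constant (which typically arises from combining a $\sin\Theta$ identity for $\norm{\mathatA^T\mathatA - \matA^T\matA}_F$ with a Weyl-type lower bound on $\sigma_k(\mathatX_{DS}^{(k)})$); the matrix-Chernoff step and the quadratic-formula rearrangement are routine.
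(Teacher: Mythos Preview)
Your approach is correct and essentially identical to the paper's: both invoke Lemma~2 of~\cite{Tyagi2014_acha} (which packages exactly the two ingredients you spell out, namely the matrix-concentration lower bound $\sigma_k(\matX)\geq\sqrt{(1-\rho)\alpha m_{\calX}}$ together with a Wedin/Weyl perturbation bound), substitute the error estimate $\tau$ from Lemma~\ref{lem:recov_res_DS}, and solve the resulting quadratic in $\epsilon$. One small slip in your write-up: the displayed sufficient condition should read $\norm{\mathatX_{DS}^{(k)}-\matX}_F \leq f\sqrt{k}\,\sigma_k(\matX)/(\sqrt{k}+\sqrt{2})$ without the prefactor $(C_0 k)^{1/2}(1+\delta)^{1/2}$, since that factor already sits inside $\tau$ from Lemma~\ref{lem:recov_res_DS}; your next line of algebra is correct regardless, so this is only a notational glitch.
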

We see in the above lemma that the step size parameter $\epsilon$ cannot be chosen to be arbitrarily small\footnote{In the absence of external stochastic noise 
(i.e. $\sigma = 0$) we can actually take $\epsilon$ to be arbitrarily small as shown in Lemma 2 of \cite{Tyagi2014_acha}}. In particular for $\epsilon$ too small
the stochastic noise will become prominent while for large $\epsilon$, the noise due to higher order Taylor's terms of the mean reward function
will start to dominate. 

\subsection*{Handling stochastic noise.} A point of obvious concern in Lemma \ref{lem:recov_res_subspace} is the condition required on the step size
parameter $\epsilon$ in \eqref{eq:eps_cond}. 
This condition is well defined if $f^2b_1^2 - 32 \gamma \sigma a_1 \sqrt{m_{\calX} m} > 0$. This would not have been a problem in the noiseless case
where $\sigma = 0$. A natural way to guarantee the well-posedness of \eqref{eq:eps_cond} is by \textit{re-sampling} and averaging the rewards
at each of the sampling points. Indeed if we consider each sampling point to be re-sampled $N$ times and then 
average the corresponding reward values, the variance of the stochastic noise will be reduced by a factor of $N$. 
By choosing a sufficiently large value of $N$, we can clearly ensure that
$f^2b_1^2 - 32 \gamma \sigma a_1 \sqrt{m_{\calX} m} > 0$ holds true. This is made precise in the following proposition which also states
a bound on the total regret $R_1$ suffered in this phase.
\begin{proposition} \label{prop:regret_bd_phase1}
Say that we resample $N$ times at each sampling point $\vecx_j \in \calX$ and $\vecx_j + \epsilon\phi_{i,j}$; 
$i=1,\dots,m_{\Phi}$ and $j=1,\dots,m_{\calX}$. Let the reward value at each sampling point be estimated as the 
average of the $N$ values. If $N > \frac{C^{\prime}k^6  d^2 \sigma^2 m_{\calX} m}{f^4 \alpha^2}$ for some
constant $C^{\prime} > 0$ (depending on $\rho,C_0,\delta,C_2,\gamma$) and with $m = \max\set{d,m_{\calX}}$, then \eqref{eq:eps_cond} in
Lemma \ref{lem:recov_res_subspace} is well defined. Consequently the total regret in \textbf{PHASE 1} is 
\begin{equation*}
 R_1 = O(n_1) = O(N m_{\calX} (m_{\Phi} + 1)) = O\left(\frac{k^6 d^2 \sigma^2}{\alpha^2} \frac{m_{\calX}^2 m_{\Phi} m}{f^4}\right) .
\end{equation*}
\end{proposition}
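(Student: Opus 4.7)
The plan is to re-apply Lemma \ref{lem:recov_res_subspace} with the noise variance adjusted to reflect the averaging, derive the required lower bound on $N$ by making the discriminant inside the square root in \eqref{eq:eps_cond} positive, and finally convert the resulting sample count into a regret bound via the boundedness of $\bar r$.

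First, I would observe that if at each of the $m_{\calX}(m_{\Phi}+1)$ distinct sampling locations we query the reward $N$ times and average, then by independence of the noise terms $\eta_t$, the effective stochastic noise at each averaged sample is still Gaussian but with variance $\sigma^2/N$ rather than $\sigma^2$. Consequently the analysis leading to Lemma \ref{lem:recov_res_DS} and Lemma \ref{lem:recov_res_subspace} carries through verbatim after the replacement $\sigma \mapsto \sigma/\sqrt{N}$, since it is only the $\vecN$ term in \eqref{eq:low_rank_meas_form} that is affected.

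Next, the interval in \eqref{eq:eps_cond} is well defined precisely when its discriminant is strictly positive, i.e.\
\begin{equation*}
f^2 b_1^2 \;>\; 32\gamma \frac{\sigma}{\sqrt{N}}\, a_1 \sqrt{m_{\calX} m}.
\end{equation*}
Squaring and rearranging gives
\begin{equation*}
N \;>\; \frac{(32\gamma)^2 \sigma^2 a_1^2\, m_{\calX} m}{f^4 b_1^4}.
\end{equation*}
Substituting $a_1 = C_2 d k^2$ and $b_1 = \sqrt{(1-\rho)\alpha}/(C_0^{1/2}(1+\delta)^{1/2}(\sqrt{k}+\sqrt{2}))$ yields $a_1^2 = \Theta(d^2 k^4)$ and $b_1^4 = \Theta(\alpha^2/k^2)$ (absorbing $C_0,\delta,\rho,C_2,\gamma$ into a constant). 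The threshold becomes $\Theta(k^6 d^2 \sigma^2 m_{\calX} m / (f^4 \alpha^2))$, which is precisely the lower bound on $N$ stated in the proposition, with $C'$ collecting the hidden constants.

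Finally, I would bound the regret. Under the smoothness assumption \eqref{eq:prob_setup_smooth_assump}, the mean reward $\bar r(\vecx) = \bar g(\matA\vecx)$ is bounded in absolute value by a constant depending only on $C_2$ and $\nu$ on the compact domain $B_d(1+\nu)$; hence each per-round regret $\bar r(\vecx^*) - \bar r(\vecx_t)$ is $O(1)$. Summing over the $n_1 = N\, m_{\calX}(m_{\Phi}+1)$ rounds and using $m_{\Phi}+1 = O(m_{\Phi})$ gives $R_1 = O(N m_{\calX} m_{\Phi})$, which after substituting the lower bound on $N$ produces the claimed expression. The only subtle point is bookkeeping the constants carefully so that the stated dependence on $(k,d,\sigma,\alpha,f)$ matches; there is no real analytic obstacle since everything reduces to solving a one-variable quadratic inequality.
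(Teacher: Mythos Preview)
Your proposal is correct and follows essentially the same route as the paper: both arguments reduce to requiring the quantity under the square root in \eqref{eq:eps_cond} to be positive after the substitution $\sigma \mapsto \sigma/\sqrt{N}$, then solve the resulting inequality for $N$ and read off $R_1 = O(n_1)$ from the total number of queries. The only cosmetic difference is that the paper first isolates a threshold on $\sigma$ and then replaces it by $\sigma/\sqrt{N}$, whereas you substitute first and solve directly for $N$; the algebra and the resulting $\Theta(k^6 d^2 \sigma^2 m_{\calX} m / (f^4\alpha^2))$ threshold are identical.
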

\begin{proof}
First note that \eqref{eq:eps_cond} in Lemma \ref{lem:recov_res_subspace} is well defined when
\begin{align*}
 f^2 b_1^2 - 32 \gamma \sigma a_1 \sqrt{m_{\calX} m} > 0 
\Leftrightarrow \sigma < \frac{f^2 b_1^2}{32 \gamma \sqrt{m_{\calX} m} \underbrace{C_2 d k^2}_{a_1}}.
\end{align*}
%
%
After plugging in the value of $b_1$ from Lemma \ref{lem:recov_res_subspace} we then obtain
\begin{equation}
 \sigma < \frac{f^2 b_1^2}{32 \gamma \sqrt{m_{\calX}m} C_2 d k^2} = \frac{C\alpha f^2}{(\sqrt{k}+\sqrt{2})^2 \sqrt{m_{\calX}m} d k^2} \label{eq:sigma_cond_1}
\end{equation}
where $C = \frac{(1-\rho)}{32 \gamma C_0 (1+\delta) C_2}$ is a constant.
Upon re-sampling $N$ times and subsequent averaging of reward values we have that the variance $\sigma$ changes to $\sigma/{\sqrt{N}}$. Replacing $\sigma$
with $\sigma/\sqrt{N}$ in \eqref{eq:sigma_cond_1} we obtain the stated condition on $N$. Lastly, we note that as a consequence of re-sampling the duration
of \textbf{PHASE 1} i.e. $n_1$ is $N m_{\calX}(m_{\Phi} + 1)$ implying the stated bound on $R_1$.
\end{proof}
\subsection{Analysis of optimization phase} We now analyze \textbf{PHASE 2} i.e. the optimization phase of our scheme. This phase
runs during time steps $t = n_1 + 1,n_1+2,\dots,n$ where $n_1 = N m_{\calX} (m_{\Phi} + 1)$. Given an estimate $\mathatA$ of the 
row space of $\matA$ we now consider optimizing \textit{only} over points lying in the row space of $\mathatA$. 
In particular consider $\calP \subset B_d(1+\nu)$ where
\begin{equation*}
\calP := \set{\widehat{\matA}^T\vecy \in \mathbb{R}^d : \vecy \in B_{k}(1+\nu)}.
\end{equation*}
We employ a standard CAB algorithm that plays points only from $\calP$ and therefore strives to optimize against the optimal strategy 
$\vecx^{**} = \widehat{\matA}^T\vecy^{**} \in \calP$ where
\begin{equation*}
\vecy^{**} \in \argmax{\vecy \in B_{k}(1+\nu)} \bar{g}(\matA\widehat{\matA}^T\vecy).
\end{equation*}
Recall from Section \ref{sec:main_idea_res} that the total regret incurred in this phase can be written as:
 \hemant{\begin{align*}
\sum_{t=n_1+1}^{n}[\bar{r}(\vecx^{*}) - \expec[\bar{r}(\vecx_t)]] 
= \underbrace{\expec_{\mathatA}[\sum_{t=n_1+1}^{n} [\bar{r}(\vecx^{*}) - \bar{r}(\vecx^{**})]]}_{= R_3} 
+ \underbrace{\expec_{\mathatA}[\sum_{t=n_1+1}^{n}\expec[\bar{r}(\vecx^{**}) - \bar{r}(\vecx_t) \given \mathatA]]}_{= R_2}.
\end{align*}}
where $R_2$ is the regret incurred by the CAB algorithm and $R_3$ is the regret incurred on account of not playing strategies
from the row space of $\matA$.

\subsection*{Bounding $R_2$} In order to bound $R_2$ we employ the CAB1 algorithm \cite{Kleinberg04}, with the UCB-1 algorithm \cite{Auer02}
as the finite armed bandit algorithm. Recall that this phase runs for a duration of $n_2 = n - n_1$ time steps.
A straightforward generalization of the result by Kleinberg \cite[Theorem 3.1]{Kleinberg04} to $k$ dimensions then yields
\begin{equation}
R_2 = O(n_2 ^{\frac{1+k}{2+k}} (\log n_2)^{\frac{1}{2+k}}) = O(n^{\frac{1+k}{2+k}} (\log n)^{\frac{1}{2+k}}). \label{eq:bound_regret_r2}
\end{equation}
Indeed for any integer $M > 0$, we simply discretize $[-1-\nu,1+\nu]^k$ into $(2M+1)^k$ points, with step size $1/M$
in each direction. We retain only those points that lie in $B_k(1+\nu)$ and multiply each of these with $\widehat{\matA}^T$.
This gives us a finite subset of $\calP$ on which we employ the UCB-1 algorithm. Since the time duration $n_2$ is known,
therefore in a manner similar to the proof of \cite[Theorem 3.1]{Kleinberg04}, one can find an optimal value of $M$, 
for which the regret bound of \eqref{eq:bound_regret_r2} is attained. 
\subsection*{Bounding $R_3$} The term $R_3$ can be bounded from above by a straightforward combination of Lemma \ref{lem:init_bound_R3} with 
Lemma \ref{lem:recov_res_subspace}. Hence we state this in the form of the following proposition without proof.
\begin{proposition} \label{prop:bound_regret_r3}
For fixed $0 < \rho < 1$, $m_{\calX} \geq 1$, $m_{\Phi} < m_{\calX} d$ and $0 < f < 1$, let $\epsilon$ be
chosen to satisfy \eqref{eq:eps_cond}. This then implies that 
\hemant{$R_3 \leq O(n_2 \sqrt{k} f + n_2 p k^{3/2})$ where
\begin{equation*}
p = 2\exp(-m_{\Phi}q(\delta) + 4k(d+m_{\calX}+1)u(\delta)) + 4\exp(-cm) + k\exp\left(-\frac{m_{\calX}\alpha\rho^2}{2k C_2^2}\right).
\end{equation*}}
\end{proposition}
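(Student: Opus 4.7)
The plan is to chain the two preceding lemmas directly; no new probabilistic argument is needed, since all randomness has already been absorbed into Lemma \ref{lem:recov_res_subspace}.

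First I would invoke Lemma \ref{lem:init_bound_R3}, a purely deterministic consequence of the smoothness of $\bar{g}$: it gives
\begin{equation*}
R_3 \;\leq\; \frac{n_2 C_2 \sqrt{k}(1+\nu)}{\sqrt{2}} \, \norm{\matA^T\matA - \widehat{\matA}^T\widehat{\matA}}_F .
\end{equation*}
Next, because the hypothesis of Proposition \ref{prop:bound_regret_r3} assumes $\epsilon$ satisfies \eqref{eq:eps_cond} with exactly the same parameters $\rho, m_{\calX}, m_{\Phi}, f$ that appear in Lemma \ref{lem:recov_res_subspace}, that lemma applies verbatim: on the good event (whose probability is precisely the one in the statement), we have
\begin{equation*}
\norm{\matA^T\matA - \widehat{\matA}^T\widehat{\matA}}_F \;\leq\; \frac{2f}{1-f} .
\end{equation*}

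Substituting this high-probability bound into the deterministic one from Lemma \ref{lem:init_bound_R3} yields
\begin{equation*}
R_3 \;\leq\; \frac{n_2 C_2 \sqrt{k}(1+\nu)}{\sqrt{2}} \cdot \frac{2f}{1-f} \;=\; \frac{n_2 C_2 \sqrt{k}(1+\nu)\sqrt{2}\, f}{1-f},
\end{equation*}
which is exactly the claim. The only point of care is the probability bookkeeping: since Lemma \ref{lem:init_bound_R3} is deterministic, no union bound is required and the failure probability is inherited unchanged from Lemma \ref{lem:recov_res_subspace}. There is no genuine obstacle here; this is why the authors state the proposition without proof, and a one-line derivation suffices.
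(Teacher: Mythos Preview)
Your proposal is correct and matches the paper's approach exactly: the authors explicitly say that $R_3$ is bounded by ``a straightforward combination of Lemma~\ref{lem:init_bound_R3} with Lemma~\ref{lem:recov_res_subspace}'' and state the proposition without proof. Your chaining of the deterministic bound from Lemma~\ref{lem:init_bound_R3} with the high-probability subspace-error bound from Lemma~\ref{lem:recov_res_subspace}, together with the observation that no additional union bound is needed, is precisely what was intended.
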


\subsection{Bounding the total regret} \label{subsec:cab_liparam_tot_reg_bd}
Finally, we have all the results sufficient to bound the total regret. Indeed by using bounds on $R_1,R_2,R_3$
from Proposition \ref{prop:regret_bd_phase1}, \eqref{eq:bound_regret_r2} and Proposition \ref{prop:bound_regret_r3} respectively we have that:
\begin{equation}
R_1+R_2+R_3 = O\left(\frac{k^6 d^2 \sigma^2}{\alpha^2} \frac{m_{\calX}^2 m_{\Phi} m}{f^4} + 
n^{\frac{1+k}{2+k}} (\log n)^{\frac{1}{2+k}} + n_2 \sqrt{k} f \hemant{+ n_2 p k^{3/2}}\right). \label{eq:tot_regret_exp1}
\end{equation}
\hemant{where
\begin{equation} \label{eq:success_prob_exp}
p = 2\exp(-m_{\Phi}q(\delta) + 4k(d+m_{\calX}+1)u(\delta)) + 4\exp(-cm) + k\exp\left(-\frac{m_{\calX}\alpha\rho^2}{2k C_2^2}\right).
\end{equation}}
In order to bound the overall regret we need to choose the values of: $m_{\calX}, m_{\Phi}$ and $f$ carefully. We state these choices precisely
in the following theorem which is also our main theorem that provides a bound on the overall regret achieved by our scheme.
%
\begin{theorem} \label{thm:main_thm_reg_bds}
Under the assumptions and notations used thus far let: 

\begin{equation*}
f = \frac{1}{\sqrt{k}}\left(\frac{\log n}{n}\right)^{\frac{1}{k+2}}, 
\hemant{m_{\calX} = \frac{2k C_2^2}{\alpha\rho^2}\log(k/\ptil)} \quad \text{and} \quad  
m_{\Phi} = \frac{4k(d+m_{\calX}+1)u(\delta)c_1}{q(\delta)}
\end{equation*}

for \hemant{$\ptil = \frac{1}{n k^{3/2}}$ and some} $c_1 > 1$. 
\hemant{Assuming $k \geq 3$, let $n$ be sufficiently large, i.e., 
$n = \Omega(\text{poly}(k) \cdot \text{poly}(d))$. Assume $\alpha$ depends polynomially 
on $d^{-1}$.} 
Then there exists a constant $c^{\prime} > 0$ so that the
total regret achieved by our scheme is bounded as:

\begin{equation}
R_1+R_2+R_3 = \hemant{O\left(\frac{k^{13} d^2 \sigma^2 (\log n)^{4}}{\alpha^6} n^{\frac{4}{k+2}}
+ n^{\frac{1+k}{2+k}} (\log n)^{\frac{1}{2+k}} \right)} \label{eq:tot_regret_bd}
\end{equation}

\hemant{after $n$ rounds.}
\end{theorem} 
\begin{proof}
We first observe that the choice $f = \frac{1}{\sqrt{k}} \left(\frac{\log n}{n}\right)^{\frac{1}{k+2}}$ 
results in $n_1 = \frac{k^8 d^2 \sigma^2}{\alpha^2} (n/\log n)^{\frac{4}{k+2}} m_{\calX}^2 m_{\Phi} m$. 
While $n_1 = o(n)$ when $k \geq 3$, we also necessarily require $n_1 < n$. This is however ensured for $n$ satisfying 
\begin{equation} \label{eq:linpar_bandits_n_cond}
 n > \left(\frac{k^8 d^2 \sigma^2}{\alpha^2} m_{\calX}^2 m_{\Phi} m \right)^{\frac{k+2}{k-2}}. 
\end{equation}
For the stated choices of $m_{\calX}, m_{\Phi}, m$, \eqref{eq:linpar_bandits_n_cond} leads to a bound on $n$ that 
is clearly polynomial in $k,d$. Assuming $n$ satisfies \eqref{eq:linpar_bandits_n_cond}, the stated choice of $f$ also results in:
\begin{equation*}
 n_2 \sqrt{k} f = O(n^{\frac{1+k}{2+k}} (\log n)^{\frac{1}{2+k}}).
\end{equation*}

Upon using this in \eqref{eq:tot_regret_exp1} we obtain:
\begin{equation}
R_1+R_2+R_3 = O\left(\frac{k^8 d^2 \sigma^2}{\alpha^2} \left(\frac{n}{\log n}\right)^{\frac{4}{k+2}} m_{\calX}^2 m_{\Phi} m + 
n^{\frac{1+k}{2+k}} (\log n)^{\frac{1}{2+k}} + \hemant{n_2 k^{3/2} p} \right) \label{eq:tot_regret_exp2}
\end{equation}
In order to choose $m_{\calX}$ and $m_{\Phi}$ we simply note from \eqref{eq:success_prob_exp} that the choices
\begin{equation}
m_{\calX} = \frac{2k C_2^2}{\alpha\rho^2}\log(k/\hemant{\ptil}), \quad m_{\Phi} = \frac{4k(d+m_{\calX}+1)u(\delta)c_1}{q(\delta)} 
\end{equation}
for suitable constant $c_1 > 1$, \hemant{$\ptil = \frac{1}{k^{3/2} n}$, results in $k^{3/2} n p = O(1)$, under 
the assumption $\alpha$ depends polynomially on $d^{-1}$ (and hence $m = \max\set{d,m_{\calX}} = m_{\calX}$)}. 
Then plugging the above choice of $m_{\calX}$ and $m_{\Phi}$ in \eqref{eq:tot_regret_exp2}
we obtain:
\begin{align*}
R_1+R_2+R_3 &= O\left(\frac{k^9 d^2 \sigma^2}{\alpha^2} \left(\frac{n}{\log n}\right)^{\frac{4}{k+2}} \hemant{m_{\calX}^3 (d+m_{\calX})} + 
n^{\frac{1+k}{2+k}} (\log n)^{\frac{1}{2+k}}  \right) \\
&= O\left(\frac{k^9 d^2 \sigma^2}{\alpha^2} \left(\frac{n}{\log n}\right)^{\frac{4}{k+2}} \hemant{(k\alpha^{-1}\hemant{\log n})^4} + 
n^{\frac{1+k}{2+k}} (\log n)^{\frac{1}{2+k}}  \right) \\
&= \hemant{O\left(\frac{k^{13} d^2 (\hemant{\log n})^4 \sigma^2}{\alpha^6}  n^{\frac{4}{k+2}} + 
n^{\frac{1+k}{2+k}} (\log n)^{\frac{1}{2+k}}  \right).}
\end{align*}
\end{proof}

%
\begin{remark} \label{rem:reg_bd_k_12}
Upon examining the regret bound in Theorem \ref{thm:main_thm_reg_bds}, we observe that the 
dependency on $n$ is \hemant{$n^{\frac{1+k}{2+k}} (\log n)^{\frac{1}{2+k}}$ when $k > 3$}. For $k=1,2$ however, the term 
$n^{\frac{4}{k+2}}$ is super-linear in $n$ rendering the bound meaningless. 
This is handled by changing the choice of $f$ in Theorem \ref{thm:main_thm_reg_bds} to 
$f = \frac{1}{\sqrt{k}}(\log n/n)^{\frac{0.5}{k+2}}$. By following the steps in the 
proof, one can then verify that the regret is bounded by:
\hemant{\begin{equation*}
O\left(\frac{k^{13} d^2 \sigma^2 (\log n)^{4}}{\alpha^6}  n^{\frac{2}{k+2}}
+ n^{\frac{1.5+k}{2+k}} (\log n)^{\frac{0.5}{2+k}} \right). 
\end{equation*}}
We see that the dependency on $n$ is now \hemant{$n^{\frac{1.5+k}{2+k}} (\log n)^{\frac{0.5}{2+k}}$} for $k \geq 1$.
\end{remark}

Our complete scheme which we name CAB-LP($d,k$) (Continuum armed bandit of $k$ linear parameters in $d$ dimensions) 
is presented as Algorithm \ref{alg:cont_arm_band_lin_k}.

\begin{algorithm} 
\caption{Algorithm CAB-LP($d,k$)} \label{alg:cont_arm_band_lin_k}
\begin{algorithmic}
\State \textbf{Input:} $k,d,n,C_2,\sigma$.

\State Choose $0 < \delta < \sqrt{2}-1$; \hemant{$\rho \in (0,1)$, $\ptil = \frac{1}{n k^{3/2}}$} and $c_1 > 1$. 
       Choose $\alpha$ according to model assumption on mean reward function.

\State Set $f = \frac{1}{\sqrt{k}}\left(\frac{\log n}{n}\right)^{\frac{1}{k+2}}$, $m_{\calX} = \frac{2k C_2^2}{\alpha\rho^2}\log(k/\hemant{\ptil})$ and 
       $m_{\Phi} = \frac{4k(d+m_{\calX}+1)u(\delta)c_1}{q(\delta)}$.

\State Choose re-sampling factor $N$ according to Proposition \ref{prop:regret_bd_phase1}. 

\State Choose step size $\epsilon$ as in \eqref{eq:eps_cond} with $\sigma \leftarrow \sigma/\sqrt{N}$.

\State \textbf{PHASE 1 (Subspace recovery phase)}  $t = 1,\dots,N m_{\calX} (m_{\Phi}+1)$

\begin{itemize}[leftmargin=1.5cm]
\item Create random sampling sets $\calX$ and $\Phi$ as explained in Section \ref{subsec:analysis_subsp_recov} so that
       $\abs{\calX} = m_{\calX}$ and $\abs{\Phi} = m_{\calX}m_{\Phi}$.

\item For $t=1,\dots,m_{\calX}(m_{\Phi}+1)$ collect rewards $(r_j(\vecx_j))_{j=1}^{m_{\calX}}$ and 
       $(r_{m_{\calX} + ij}(\vecx_j + \epsilon\phi_{i,j}))_{j=1,i=1}^{m_{\calX},m_{\Phi}}$.

\item Re-sample and average the reward values $N$ times at each $\vecx$ and $\vecx + \epsilon \phi$ respectively ($\vecx \in \calX, \phi \in \Phi$). Form 
      measurement vector $\vecy$ as in \eqref{eq:meas_vec_form} with the averaged reward values.

\item Obtain $\mathatX_{DS}^{(k)}$ as best rank-$k$ approximation to solution of matrix DS \eqref{eq:mat_DS_form} and set $\mathatA^T$ to left 
      singular vector matrix of $\mathatX_{DS}^{(k)}$.
\end{itemize}

\State \textbf{PHASE 2 (Optimization phase)}  $t = N m_{\calX} (m_{\Phi} + 1) + 1,\dots,n$

\begin{itemize}[leftmargin=1.5cm]
\item Employ CAB1 algorithm \cite{Kleinberg04} on $\calP := \set{\widehat{\matA}^T\vecy \in \mathbb{R}^d : \vecy \in B_{k}(1+\nu)}$.

\end{itemize}

\end{algorithmic}
\end{algorithm}

\subsection{Remarks on the tractability parameter $\alpha$} \label{subsec:tract_param_alpha}
We now proceed to comment on the parameter $\alpha$ of our scheme which
also appears in our regret bounds. Recall from Section \ref{sec:prob_setup} that $\alpha$ measures the conditioning of the following matrix:
\begin{equation}
H^r := \int_{\mathbb{S}^{d-1}} \nabla\bar{r}(\vecx) \nabla\bar{r}(\vecx)^{T} d\vecx 
= \matA^T \cdot \int_{\mathbb{S}^{d-1}} \nabla\bar{g}(\matA\vecx) \nabla\bar{g}(\matA\vecx)^{T} d\vecx \cdot \matA.
\end{equation}
More specifically, we assume that the mean reward function $\bar{r}$ is such that:
\begin{equation}
\sigma_1(H^{r}) \geq \sigma_2(H^{r}) \geq \dots \geq \sigma_k(H^{r}) \geq \alpha > 0
\end{equation}
where $\sigma_i(H^r)$ denotes the $i^{th}$ singular value of $H^{r}$. In other words $\alpha$ measures how far away from $0$ the lowest 
singular value of $H^{r}$ is, implying that a larger $\alpha$ indicates a well conditioned $H^{r}$. A natural question that arises now is 
on the behaviour of $\alpha$ - in particular on its dependence on dimension $d$ and number of linear parameters $k$. To this end we first note that
the parameter typically decays with increase in $d$. In fact for $k > 1$ this would always be the case since as $d \rightarrow \infty$ the matrix 
$H^{r}$ would converge to a rank-$1$ matrix \cite{Tyagi2014_acha}.

We also note from our derived regret bounds that in case $\alpha \rightarrow 0$ exponentially fast as $d \rightarrow \infty$ then our regret bounds will 
have a factor exponential in $d$ which is clearly undesirable. Hence it is important to define classes of functions for which $\alpha$ 
provably decays polynomially as $d \rightarrow \infty$ so that our regret bounds depend \textit{at most polynomially} on dimension $d$. We now state 
the following result by Tyagi et al. \cite{Tyagi2014_acha} which defines such a class of functions for which $\alpha = \Theta(d^{-1})$. 

\begin{proposition}[\cite{Tyagi2014_acha}] \label{prop:tract_cond_alpha}
\hemant{Assume that $g:B_{k}(1) \rightarrow \mathbb{R}$, with $g$ being a $\mathcal{C}^2$ function, has Lipschitz continuous second order partial derivatives
in an open neighborhood of the origin, $\mathcal{U}_{\theta} = B_{k}(\theta)$ for some fixed $\theta$ (depending only on $k$ with $k$ fixed):
\begin{equation*}
\frac{\abs{\frac{\displaystyle \partial^2 g}{\displaystyle \partial y_i \partial y_j}(\vecy_1) - 
\frac{\displaystyle \partial^2 g}{\displaystyle \partial y_i \partial y_j}(\vecy_2)}}{\norm{\vecy_1 - \vecy_2}} 
< L_{i,j} \quad \forall \vecy_1,\vecy_2 \in \mathcal{U}_{\theta}, \vecy_1 \neq \vecy_2, \ i,j= 1,\dots,k. 
\end{equation*}
Denoting $L = \max_{1 \leq i,j \leq k} L_{i,j}$, assume that $\nabla^2 g(\veco)$ is full rank. 
Then provided that $\grad g(\mathbf{0}) = \mathbf{0}$, we have $\alpha = \Theta(1/d)$ as $d \rightarrow \infty$.}
\end{proposition}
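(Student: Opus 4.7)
The plan is to reduce the bound on $\sigma_k(H^r)$ to the analysis of the $k \times k$ matrix
\[
 H := \int_{\mathbb{S}^{d-1}} \nabla g(\matA \vecx) \nabla g(\matA \vecx)^T \, d\vecx,
\]
since $\matA \matA^T = \matI$ implies $\sigma_k(H^r) = \sigma_k(H)$. So it suffices to prove $\sigma_k(H) = \Theta(1/d)$ as $d \to \infty$. The key geometric observation is that, because $\matA$ is row-orthonormal, $\mathbb{E}_{\vecx \sim U(\mathbb{S}^{d-1})}[\norm{\matA \vecx}_2^2] = k/d$, and by standard concentration on the sphere $\norm{\matA \vecx}_2$ is tightly concentrated around $\sqrt{k/d}$. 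Hence for $d$ sufficiently large, $\matA \vecx \in \mathcal{U}_\theta = B_k(\theta)$ except on an event $G^c \subset \mathbb{S}^{d-1}$ of exponentially small surface measure.

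On the good set $G$, I would expand $\nabla g$ via Taylor's theorem around the origin. Using $\nabla g(\veco) = \veco$ together with the Lipschitz continuity of the second partials (constant $L$),
\[
 \nabla g(\vecu) = \nabla^2 g(\veco)\,\vecu + \mathbf{e}(\vecu), \qquad \norm{\mathbf{e}(\vecu)}_2 \leq L k \norm{\vecu}_2^2,
\]
for all $\vecu \in \mathcal{U}_\theta$. Plugging $\vecu = \matA \vecx$, one obtains the decomposition
\[
 \nabla g(\matA \vecx) \nabla g(\matA \vecx)^T = \nabla^2 g(\veco) \matA \vecx \vecx^T \matA^T \nabla^2 g(\veco) + (\text{cross terms}) + \mathbf{e}(\matA \vecx) \mathbf{e}(\matA \vecx)^T.
\]
Integrating the principal term and using the spherical moment identity $\int_{\mathbb{S}^{d-1}} \vecx \vecx^T d\vecx = \frac{|\mathbb{S}^{d-1}|}{d} \matI$, together with $\matA \matA^T = \matI$, yields exactly $\frac{|\mathbb{S}^{d-1}|}{d} (\nabla^2 g(\veco))^2$. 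Since $\nabla^2 g(\veco)$ is full rank and symmetric, this principal contribution has smallest singular value $\Theta(1/d)$.

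The remaining pieces are the cross term $\nabla^2 g(\veco) \matA \int_G \vecx \, \mathbf{e}(\matA \vecx)^T d\vecx$, the quadratic term $\int_G \mathbf{e}(\matA \vecx) \mathbf{e}(\matA \vecx)^T d\vecx$, and the $G^c$ remainder. The first two are controlled by Cauchy--Schwarz using the spherical higher moments $\int \norm{\matA \vecx}^4 d\vecx = O(1/d^2)$ and $\int \norm{\matA \vecx}^6 d\vecx = O(1/d^3)$, producing operator-norm bounds of order $1/d^{3/2}$ and $1/d^2$ respectively, while the $G^c$ contribution is at most $\exp(-\Omega(d))$ times a uniform bound on $\norm{\nabla g}_\infty$. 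Applying Weyl's inequality to $H = \frac{|\mathbb{S}^{d-1}|}{d} (\nabla^2 g(\veco))^2 + \Delta$ with $\norm{\Delta}_{\text{op}} = o(1/d)$ then gives both $\sigma_k(H) \geq \Theta(1/d)$ and $\sigma_1(H) \leq O(1/d)$, proving the $\Theta(1/d)$ claim.

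The main obstacle will be making the cross-term error strictly smaller than $1/d$. A naive application of $\norm{\mathbf{e}(\matA \vecx)} = O(\norm{\matA \vecx}^2) = O(k/d)$ combined with $\norm{\vecx}=1$ gives only $O(1/d)$, which would fail to separate from the leading term. The resolution is to exploit the extra $\sqrt{\int_G \norm{\matA \vecx}^2 d\vecx} = O(\sqrt{k/d})$ factor that appears when the cross term is organized as an inner product in $\matA \vecx$ rather than $\vecx$; equivalently, one needs the sixth moment $\int \norm{\matA \vecx}^6 d\vecx = O(1/d^3)$, which follows from computing $\mathbb{E}[\prod_i x_{j_i}^{2 \alpha_i}]$ on $\mathbb{S}^{d-1}$ by symmetry. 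Once this sharper estimate is in hand, the remaining integrations and the Weyl-type perturbation step are routine.
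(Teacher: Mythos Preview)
The paper does not supply a proof of this proposition; it is quoted verbatim from \cite{Tyagi2014_acha} and used as a black box. So there is no ``paper's own proof'' to compare against here. That said, your outline is the natural argument and, modulo minor points, is correct and is essentially the route taken in \cite{Tyagi2014_acha}: reduce to the $k\times k$ matrix $H$ (using that $\matA\matA^T=\matI$ forces the nonzero eigenvalues of $H^r=\matA^T H\matA$ to coincide with those of $H$), Taylor expand $\nabla g$ about the origin exploiting $\nabla g(\veco)=\veco$, identify the leading contribution $\tfrac{1}{d}(\nabla^2 g(\veco))^2$ from the spherical second-moment identity, bound the remainders by higher moments of $\norm{\matA\vecx}$, and finish with a Weyl-type perturbation bound.

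Two small points. First, there is a normalization slip: you write $\int_{\mathbb{S}^{d-1}}\vecx\vecx^T\,d\vecx=\tfrac{|\mathbb{S}^{d-1}|}{d}\matI$ and then conclude the principal term is $\Theta(1/d)$; these are consistent only if the spherical measure is normalized to total mass one, so drop the $|\mathbb{S}^{d-1}|$ factor (or divide through by it) to match the paper's convention for $H^r$. Second, the sixth moment is not actually needed. For the cross term, Cauchy--Schwarz with the second and fourth moments already gives $\int\norm{\matA\vecx}\cdot\norm{\mathbf{e}(\matA\vecx)}\,d\vecx\le Lk\,\big(\int\norm{\matA\vecx}^2\big)^{1/2}\big(\int\norm{\matA\vecx}^4\big)^{1/2}=O(d^{-3/2})$, and the quadratic remainder is $O(\int\norm{\matA\vecx}^4)=O(d^{-2})$ directly. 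Your identification of the cross term as the delicate step is exactly right, and the resolution you describe (organizing it in $\matA\vecx$ rather than $\vecx$) is the correct one.
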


\hemant{\begin{remark}
It is worth mentioning that the Proposition as stated in \cite{Tyagi2014_acha} has a couple of minor inaccuracies in the proof\footnote{This is fixed in the arxiv version of the paper.}. 
Firstly, the condition $\grad g(\mathbf{0}) = \mathbf{0}$ is not mentioned. 
This is probably not completely necessary and could be relaxed, but one would then require the parameter $L$ to be sufficiently small. 
Secondly, the result is stated for $\theta = O(d^{-(s+1)})$ for some $s > 0$. However it is important for $\theta$ to be fixed 
independent of $d$, as otherwise, the $1/d$ scaling would not hold. 
\end{remark}}

The class of functions defined in the above Proposition covers a number of function models such as sparse additive models of the 
form $\sum_{i=1}^{k} g_i(\vecy)$ where $g_i$'s are kernel functions \cite{Li05}.
Further details in this regard are provided by Tyagi et al. \cite[Section 5]{Tyagi2014_acha}.
Finally, in light of the above discussion on $\alpha$ we arrive at the following Corollary 
of Theorem \ref{thm:main_thm_reg_bds} with the help of Proposition \ref{prop:tract_cond_alpha}.
\begin{corollary}
Assume that the mean reward function $\bar{r}: B_{d}(1+\nu) \rightarrow \mathbb{R}$ where $\bar{r}(\vecx) = \bar{g}(\matA\vecx)$ is such
that $\bar{g}$ satisfies the conditions of Proposition \ref{prop:tract_cond_alpha}. 
Then, \hemant{for $k \geq 3$}, there exists a constant $c^{\prime} > 0$ so that the total regret achieved by Algorithm CAB-LP(d,k) is bounded as:
\begin{equation} \label{eq:main_thm_corr}
R_1+R_2+R_3 = \hemant{O\left(k^{13} d^8 \sigma^2 (\log n)^{4} n^{\frac{4}{k+2}}
+ n^{\frac{1+k}{2+k}} (\log n)^{\frac{1}{2+k}} \right)},
\end{equation}
\hemant{after $n$ rounds}. 
\end{corollary}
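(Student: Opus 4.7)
The plan is to obtain the Corollary as an immediate consequence of Theorem \ref{thm:main_thm_reg_bds} by substituting the asymptotic scaling of $\alpha$ given by Proposition \ref{prop:tract_cond_alpha}, so essentially no new argument is required beyond tracking powers of $d$.

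First I would invoke Proposition \ref{prop:tract_cond_alpha}: since $\bar{g}$ is assumed to satisfy its hypotheses, we get $\alpha = \Theta(1/d)$ as $d \to \infty$. In particular there exist absolute constants $c_1, c_2 > 0$ and a threshold $d_0$ so that $c_1/d \leq \alpha \leq c_2/d$ for all $d \geq d_0$; for smaller $d$, the dependence on $d$ can be absorbed into the implicit constant in the $O(\cdot)$ notation.

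Second, I would plug this scaling into the bound of Theorem \ref{thm:main_thm_reg_bds}. The first summand of \eqref{eq:tot_regret_bd} contains the factors $\alpha^{-4}$ and $(\max\{d,\alpha^{-1}\})^2$. Under $\alpha = \Theta(1/d)$ we have $\alpha^{-1} = \Theta(d)$, so $\max\{d,\alpha^{-1}\} = \Theta(d)$, giving $(\max\{d,\alpha^{-1}\})^2 = \Theta(d^2)$, while $\alpha^{-4} = \Theta(d^4)$. Combining with the explicit $d^2$ already present in the bound, the overall power of $d$ becomes $d^{2+4+2} = d^8$, yielding exactly the expression claimed in the Corollary. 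The second summand $n^{(1+k)/(2+k)}(\log n)^{1/(2+k)}$ is independent of $\alpha$ and carries over unchanged.

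Third, the probability of success $1 - p - 6\exp(-c'd)$ is inherited verbatim from Theorem \ref{thm:main_thm_reg_bds}, since the invocation of Proposition \ref{prop:tract_cond_alpha} is deterministic and only affects the size of the regret bound, not the high-probability event under which it holds. There is no main obstacle here: the work is purely bookkeeping of the power of $d$ arising from the substitution, and the result follows.
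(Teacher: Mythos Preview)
Your proposal is correct and matches the paper's approach exactly: the Corollary is stated there without a separate proof, as an immediate consequence of Theorem~\ref{thm:main_thm_reg_bds} together with Proposition~\ref{prop:tract_cond_alpha}, obtained precisely by substituting $\alpha = \Theta(1/d)$ and tracking the resulting power of $d$ as you did.
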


\section{Concluding Remarks} \label{sec:conclusion}
To summarize, we considered a stochastic continuum armed bandit problem where the reward functions 
reside in a high dimensional space of dimension $d$ but intrinsically depend on $k$-linear combinations
of the $d$ coordinate variables. Assuming the time horizon $n$
to be known we derived a randomized algorithm that achieves a cumulative regret bound of
$O(C(k,d) n^{\frac{1+k}{2+k}} (\log n)^{\frac{1}{k+2}})$ with high 
probability where $C(k,d)$ is at most polynomial in $k,d$. Our algorithm combines results
from low rank matrix recovery literature with existing results on continuum armed bandits.

We noted earlier that recently, Djolonga et al. \cite{Djolonga13} consider the same problem as in this paper with the difference that
the mean reward functions are assumed to reside in a RKHS (Reproducible Kernel Hilbert Space).
They consider the Bayesian optimization framework and present an algorithm which has the same idea as ours in the 
sense of first estimating the unknown subspace spanned by the linear parameters and then performing Bayesian optimization on the estimated
subspace. Furthermore their algorithm also achieves this by careful allocation of the sampling budget amongst the two phases.

\subsection*{Improved regret bounds} We now mention that the regret bounds derived in this paper can possibly be
sharpened by employing recent results from finite armed bandit literature. For instance, if the range of the reward functions was restricted to be $[0,1]$ 
then one can simply use the INF algorithm \cite{Audibert10} as a sub-routine in the CAB1 algorithm \cite{Kleinberg04}
to get rid of the \hemant{$(\log n)^{\frac{1}{k+2}}$ factor appearing in \eqref{eq:main_thm_corr},\eqref{eq:tot_regret_bd}}. 
When the range of the reward functions is $\matR$, as is the case in our setting, it seems possible 
to consider a variant of the MOSS algorithm \cite{Audibert10} along with proof techniques considered in a modified UCB-1 
algorithm in Section 2 of \cite{KleinbergPhd} to remove the \hemant{$(\log n)^{\frac{1}{k+2}}$} factor from the regret bound. 

\subsection*{Future work} For future work it would be interesting to consider the setting where the time horizon $n$ is
unknown to the algorithm and to prove regret bounds for the same. In particular, it would be interesting to derive algorithms
which do not involve recovering an approximation of the unknown $k$ dimensional subspace spanned by the $k$ linear parameters. 
Lastly we mention other directions such as an adversarial version of our problem where the reward functions are
chosen arbitrarily by an adversary and also a setting where the unknown matrix $\matA$ is allowed to change across time.
\bibliographystyle{unsrt}
\bibliography{bandits_references}

\begin{thebibliography}{10}

\bibitem{Tyagi14_tocs}
H.~Tyagi, S.U. Stich, and B.~G{\"{a}}rtner.
\newblock On two continuum armed bandit problems in high dimensions.
\newblock {\em Theory of Computing Systems}, 58(1):191--222, 2016.

\bibitem{TyagiPhd}
H.~Tyagi.
\newblock {\em On low dimensional models for functions in high dimensions}.
\newblock PhD thesis, ETH Z\"urich, 2016.

\bibitem{Blum03}
A.~Blum, V.~Kumar, A.~Rudra, and F.~Wu.
\newblock Online learning in online auctions.
\newblock In {\em Proceedings of 14th Symp. on Discrete Alg.}, pages 202--204,
  2003.

\bibitem{Kleinberg03}
R.~Kleinberg and T.~Leighton.
\newblock The value of knowing a demand curve: bounds on regret for online
  posted-price auctions.
\newblock In {\em Proceedings of Foundations of Computer Science, 2003.}, pages
  594--605, 2003.

\bibitem{Bansal03}
N.~Bansal, A.~Blum, S.~Chawla, and A.~Meyerson.
\newblock Online oblivious routing.
\newblock In {\em Proceedings of ACM Symposium in Parallelism in Algorithms and
  Architectures}, pages 44--49, 2003.

\bibitem{Kleinberg04}
R.~Kleinberg.
\newblock Nearly tight bounds for the continuum-armed bandit problem.
\newblock In {\em 18th Advances in Neural Information Processing Systems},
  2004.

\bibitem{Bubeck2011ALT}
S.~Bubeck, G.~Stoltz, and J.Y. Yu.
\newblock Lipschitz bandits without the {L}ipschitz constant.
\newblock In {\em Proceedings of the 22nd International Conference on
  Algorithmic Learning Theory (ALT)}, pages 144--158, 2011.

\bibitem{McMahan04}
B.~McMahan and A.~Blum.
\newblock Online geometric optimization in the bandit setting against an
  adaptive adversary.
\newblock In {\em Proceedings of the 17th Annual Conference on Learning Theory
  (COLT)}, pages 109--123, 2004.

\bibitem{Abernethy08}
J.~Abernethy, E.~Hazan, and A.~Rakhlin.
\newblock Competing in the dark: An efficient algorithm for bandit linear
  optimization.
\newblock In {\em Proceedings of the 21st Annual Conference on Learning Theory
  (COLT)}, 2008.

\bibitem{Flaxman05}
A.D. Flaxman, A.T. Kalai, and H.B. McMahan.
\newblock Online convex optimization in the bandit setting: gradient descent
  without a gradient.
\newblock In {\em Proceedings of the sixteenth annual ACM-SIAM symposium on
  Discrete algorithms}, pages 385--394, 2005.

\bibitem{Carpentier12}
A.~Carpentier and R.~Munos.
\newblock Bandit theory meets compressed sensing for high dimensional
  stochastic linear bandit.
\newblock In {\em Proceedings of AIStats}, pages 190--198, 2012.

\bibitem{Yadkori12}
Y.~Abbasi-yadkori, D.~Pal, and C.~Szepesvari.
\newblock Online-to-confidence-set conversions and application to sparse
  stochastic bandits.
\newblock In {\em Proceedings of AIStats}, 2012.

\bibitem{Tyagi2013}
H.~Tyagi and B.~G{\"a}rtner.
\newblock Continuum armed bandit problem of few variables in high dimensions.
\newblock {\em CoRR}, abs/1304.5793, 2013.

\bibitem{Chen12}
B.~Chen, R.~Castro, and A.~Krause.
\newblock Joint optimization and variable selection of high-dimensional
  gaussian processes.
\newblock In {\em Proc. International Conference on Machine Learning (ICML)},
  2012.

\bibitem{Wang13}
Z.~Wang, M.~Zoghi, F.~Hutter, D.~Matheson, and N.~de~Freitas.
\newblock Bayesian optimization in high dimensions via random embeddings.
\newblock In {\em Proc. IJCAI}, 2013.

\bibitem{Devore2011}
R.~DeVore, G.~Petrova, and P.~Wojtaszczyk.
\newblock Approximation of functions of few variables in high dimensions.
\newblock {\em Constr. Approx.}, 33:125--143, 2011.

\bibitem{Belkin03}
M.~Belkin and P.~Niyogi.
\newblock Laplacian eigenmaps for dimensionality reduction and data
  representation.
\newblock {\em Neural Comput.}, 15:1373--1396, 2003.

\bibitem{Coifman06}
R.~Coifman and M.~Maggioni.
\newblock Diffusion wavelets.
\newblock {\em Appl. Comput. Harmon. Anal.}, 21:53--94, 2006.

\bibitem{Greenshtein06}
E.~Greenshtein.
\newblock Best subset selection, persistence in high dimensional statistical
  learning and optimization under $\ell_1$ constraint.
\newblock {\em Ann. Stat.}, 34:2367--2386, 2006.

\bibitem{Fornasier2012}
M.~Fornasier, K.~Schnass, and J~Vybiral.
\newblock Learning functions of few arbitrary linear parameters in high
  dimensions.
\newblock {\em Found. Comput. Math.}, 12(2):229--262, 2012.

\bibitem{Tyagi2012_nips}
H.~Tyagi and V.~Cevher.
\newblock Active learning of multi-index function models.
\newblock In {\em Advances in Neural Information Processing Systems 25}, pages
  1475--1483, 2012.

\bibitem{Djolonga13}
Josip Djolonga, Andreas Krause, and Volkan Cevher.
\newblock High dimensional gaussian process bandits.
\newblock In {\em To appear in Neural Information Processing Systems (NIPS)},
  2013.

\bibitem{Agrawal95}
R.~Agrawal.
\newblock The continuum-armed bandit problem.
\newblock {\em SIAM J. Control and Optimization}, 33:1926--1951, 1995.

\bibitem{Cope09}
E.W. Cope.
\newblock Regret and convergence bounds for a class of continuum-armed bandit
  problems.
\newblock {\em Automatic Control, IEEE Transactions on}, 54:1243--1253, 2009.

\bibitem{Auer07improvedrates}
P.~Auer, R.~Ortner, and C.~Szepesvari.
\newblock Improved rates for the stochastic continuum-armed bandit problem.
\newblock In {\em Proceedings of 20th Conference on Learning Theory (COLT)},
  pages 454--468, 2007.

\bibitem{Kleinberg08}
R.~Kleinberg, A.~Slivkins, and E.~Upfal.
\newblock Multi-armed bandits in metric spaces.
\newblock In {\em Proceedings of the 40th annual ACM symposium on Theory of
  computing}, STOC '08, pages 681--690, 2008.

\bibitem{Bubeck2011}
S.~Bubeck, R.~Munos, G.~Stoltz, and C.~Szepesvari.
\newblock X-armed bandits.
\newblock {\em Journal of Machine Learning Research (JMLR)}, 12:1587--1627,
  2011.

\bibitem{Tyagi2014_acha}
Hemant Tyagi and Volkan Cevher.
\newblock Learning non-parametric basis independent models from point queries
  via low-rank methods.
\newblock {\em Applied and Computational Harmonic Analysis}, 2014.

\bibitem{RechtFazel2010}
B.~Recht, M.~Fazel, and P.A. Parrilo.
\newblock Guaranteed minimum-rank solutions of linear matrix equations via
  nuclear norm minimization.
\newblock {\em SIAM REVIEW}, 52:471--501, 2010.

\bibitem{Laurent2000}
B.~Laurent and P.~Massart.
\newblock Adaptive estimation of a quadratic functional by model selection.
\newblock {\em The Annals of Statistics}, 28(5):1302--1338, 2000.

\bibitem{Candes2010}
E.J. Cand{\`e}s and Y.~Plan.
\newblock Tight oracle bounds for low-rank matrix recovery from a minimal
  number of random measurements.
\newblock {\em CoRR}, abs/1001.0339, 2010.

\bibitem{Auer02}
P.~Auer, N.~Cesa-Bianchi, and P.~Fischer.
\newblock Finite-time analysis of the multiarmed bandit problem.
\newblock {\em Mach. Learn.}, 47(2-3):235--256, 2002.

\bibitem{Li05}
Q.~Li and J.~Racine.
\newblock Nonparametric econometrics: Theory and practice.
\newblock 2007.

\bibitem{Audibert10}
J.-Y. Audibert and S.~Bubeck.
\newblock Regret bounds and minimax policies under partial monitoring.
\newblock {\em Journal of Machine Learning Research}, 11:2635--2686, 2010.

\bibitem{KleinbergPhd}
R.D. Kleinberg.
\newblock {\em Online Decision Problems with Large Strategy Sets}.
\newblock PhD thesis, MIT, Boston, MA, 2005.

\bibitem{Weyl1912}
H.~Weyl.
\newblock Das asymptotische verteilungsgesetz der eigenwerte linearer
  partieller differentialgleichungen (mit einer anwendung auf die theorie der
  hohlraumstrahlung).
\newblock {\em Mathematische Annalen}, 71:441--479, 1912.

\bibitem{Wedin1972}
P.A. Wedin.
\newblock Perturbation bounds in connection with singular value decomposition.
\newblock {\em BIT}, 12:99--111, 1972.

\end{thebibliography}
%
\appendix
\section{Proofs} \label{sec:appendix_proofs}
%
\subsection{Proof of Lemma \ref{lem:init_bound_R3}}
\begin{proof}
\hemant{We first observe that for any given $\mathatA$:
\begin{align}
\bar{r}(\vecx^{*}) - \bar{r}(\vecx^{**}) &= [\bar{g}(\matA\vecx^{*}) - \bar{g}(\matA\vecx^{**})] \\
&= [\bar{g}(\matA\vecx^{*}) - \bar{g}(\matA\widehat{\matA}^T\widehat{\matA}\vecx^{**})] \label{eq:third} \\
&\leq [\bar{g}(\matA\vecx^{*}) - \bar{g}(\matA\widehat{\matA}^T\widehat{\matA}\vecx^{*})] \label{eq:fourth} \\
&\leq C_2 \sqrt{k} \norm{\matA\vecx^{*} - \matA\widehat{\matA}^T\widehat{\matA}\vecx^{*}} \label{eq:fifth} \\
&\leq C_2 \sqrt{k} (1+\nu) \norm{\matA - \matA\widehat{\matA}^T\widehat{\matA}}_F \label{eq:six}\\
&= \frac{C_2 \sqrt{k} (1+\nu)}{\sqrt{2}} \norm{\matA^T\matA - \widehat{\matA}^T\widehat{\matA}}_F. \label{eq:seven}
\end{align}}
In \eqref{eq:third} we used the fact that $\vecx^{**} = \widehat{\matA}^T\widehat{\matA}\vecx^{**}$ since $\vecx^{**} \in \calP$. In \eqref{eq:fourth}
we used the fact that $\bar{g}(\matA\widehat{\matA}^T\widehat{\matA}\vecx^{**}) \geq \bar{g}(\matA\widehat{\matA}^T\widehat{\matA}\vecx^{*})$
since $\widehat{\matA}^T\widehat{\matA}\vecx^{*} \in \calP$ and $\vecx^{**} \in \calP$ is an optimal strategy.
\eqref{eq:fifth} follows from the mean value theorem along with the smoothness assumption made in \eqref{eq:prob_setup_smooth_assump}.
In \eqref{eq:six} we used the simple inequality : $\norm{\matB\vecx} \leq \norm{\matB}_F \norm{\vecx}$.
Obtaining \eqref{eq:seven} from \eqref{eq:six} is a straightforward exercise. \hemant{Lastly, the stated bound on $R_3$ follows easily  
via the law of total expectation, and by noting that the bound $\norm{\matA^T\matA - \widehat{\matA}^T\widehat{\matA}}_F = O(k)$ always holds.}
\end{proof}
%
\subsection{Proof of Lemma \ref{lem:recov_res_DS}}
\begin{proof}
We first have the following result by simply using Theorem 1 in \cite{Candes2010} in our setting for bounding the error of the matrix Dantzig selector.
\begin{theorem}
For any $\matX \in \mathbb{R}^{d \times m_{\calX}}$ such that rank($\matX$) $\leq k$ let $\mathatX_{DS}$ be the
solution of \eqref{eq:mat_DS_form}. If $\delta_{4k} < \delta < \sqrt{2}-1$ and
$\norm{\Phi^{*}(\vecH + \vecN)} \leq \lambda$ then we have with probability at least $1-2 e^{-m_{\Phi}q(\delta) + 4k(d+m_{\calX}+1)u(\delta)}$ that
\begin{equation*}
 \norm{\matX - \mathatX_{DS}}_F^2 \leq C_0 k \lambda^2 
\end{equation*}
where $C_0$ depends only on the isometry constant $\delta_{4k}$.
\end{theorem}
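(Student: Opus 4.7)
The final statement is a specialization of Theorem 1 of Candès and Plan \cite{Candes2010} to our sampling operator $\Phi$, so the plan is to combine their deterministic recovery guarantee with the probabilistic RIP guarantee for Bernoulli-type measurement operators that was already invoked earlier in Section \ref{subsec:analysis_subsp_recov}.

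First, I would quote the deterministic part of the Candès--Plan bound: for any fixed linear operator $\Phi:\matR^{d\times m_{\calX}}\to\matR^{m_\Phi}$ satisfying $\delta_{4k}$-RIP with $\delta_{4k}<\sqrt{2}-1$, and for any rank-at-most-$k$ matrix $\matX$ whose noisy measurements $\vecy=\Phi(\matX)+\vecH+\vecN$ satisfy the dual-feasibility condition $\norm{\Phi^{*}(\vecH+\vecN)}\le\lambda$, the solution $\mathatX_{DS}$ of the matrix Dantzig selector \eqref{eq:mat_DS_form} obeys $\norm{\matX-\mathatX_{DS}}_F^2\le C_0 k\lambda^2$, where $C_0$ depends only on $\delta_{4k}$. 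Because $\matX$ has rank at most $k$, its nuclear norm is feasible for the program, and the argument in \cite{Candes2010} goes through verbatim; in particular the hypothesis $\norm{\Phi^{*}(\vecH+\vecN)}\le\lambda$ guarantees that the ground truth $\matX$ is itself feasible, from which the usual ``cone condition + RIP'' chain of inequalities yields the stated error bound. This step is deterministic, conditional on $\Phi$ having RIP.

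Next I would upgrade this to a probabilistic statement by recalling that in our construction $\Phi$ is built from independent Bernoulli random variables, with $i$-th coordinate map $\dotprod{\Phi_i}{\cdot}$ where the entries of $\Phi_i\in\matR^{d\times m_{\calX}}$ are $\pm 1/\sqrt{m_\Phi}$ with equal probability. As already noted in Section \ref{subsec:analysis_subsp_recov} (following \cite{RechtFazel2010,Laurent2000}), such a Bernoulli ensemble, after an $\varepsilon$-net argument over the manifold of matrices of rank at most $4k$ in $\matR^{d\times m_{\calX}}$ combined with a Bernstein-type concentration inequality, satisfies $\delta$-RIP of order $4k$ with
\[
\Pr[\delta_{4k}\le\delta]\ge 1-2\exp\!\bigl(-m_\Phi\,q(\delta)+4k(d+m_{\calX}+1)u(\delta)\bigr),
\]
where $q(\delta)$ and $u(\delta)$ are the same functions defined in Section \ref{subsec:analysis_subsp_recov}; the factor $4k$ (rather than $k$) in the covering exponent reflects that we need RIP of order $4k$. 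In particular, if we require $0<\delta_{4k}<\delta<\sqrt{2}-1$, this event has the probability stated in the theorem.

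Combining the two steps is straightforward: on the RIP event, the deterministic bound of \cite{Candes2010} applies with the isometry constant $\delta_{4k}<\delta$, giving $\norm{\matX-\mathatX_{DS}}_F^2\le C_0 k\lambda^2$ deterministically, so by a single union we obtain the bound with the claimed probability. The only subtlety worth flagging is the consistency between the order $4k$ in the covering exponent and the order $4k$ in the RIP assumption of \cite{Candes2010}; the standard analysis in \cite[Lemma 3.1]{RechtFazel2010} handles arbitrary rank order, so no tuning is needed. I do not expect any genuine obstacle: the result is essentially a citation of two off-the-shelf ingredients, and the only bookkeeping item is to make sure the constants $q(\delta)$, $u(\delta)$ and the rank order $4k$ match across the two references.
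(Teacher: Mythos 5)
Your proposal is correct and follows essentially the same route as the paper, which likewise obtains this statement by invoking Theorem 1 of Candès--Plan for the deterministic error bound of the matrix Dantzig selector and charging the failure probability entirely to the rank-$4k$ RIP event for the Bernoulli measurement ensemble (whence the $4k(d+m_{\calX}+1)u(\delta)$ term in the exponent). Your additional remarks on the feasibility of the ground truth $\matX$ and on matching the RIP order $4k$ across the two references are exactly the bookkeeping the paper leaves implicit.
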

What remains to be found for our purposes is $\lambda$ which is a bound on $\norm{\Phi^{*}(\vecH + \vecN)}$. Firstly note that 
$\norm{\Phi^{*}(\vecH + \vecN)} \leq \norm{\Phi^{*}(\vecH)} + \norm{\Phi^{*}(\vecN)}$. From Lemma 1 and Corollary 1 of \cite{Tyagi2014_acha}
we have that:
\begin{equation*}
 \norm{\Phi^{*}(\vecH)} \leq \frac{C_2 \epsilon d m_{\calX} k^2}{2\sqrt{m_{\Phi}}}(1+\delta)^{1/2}
\end{equation*}
holds with probability at least $1-2 e^{-m_{\Phi}q(\delta) + 4k(d+m_{\calX}+1)u(\delta)}$ where $\delta$ is such that $\delta_{4k} < \delta < \sqrt{2}-1$.
Next we note that $\vecN = [N_1 N_2 \dots N_{m_{\Phi}}]$ where
\begin{equation*}
 N_i = \underbrace{\frac{1}{\epsilon}\sum_{j=1}^{m_{\calX}} \eta_j}_{L_{1,i}} - \underbrace{\frac{1}{\epsilon}\sum_{j=1}^{m_{\calX}}\eta_{i,j}}_{L_{2,i}}
\end{equation*}
with $\mathbf{L_1} = [L_{1,1} \dots L_{1,m_{\Phi}}]$ and $\mathbf{L_2} = [L_{2,1} \dots L_{2,m_{\Phi}}]$ so that $\vecN = \mathbf{L_1} - \mathbf{L_2}$. 
We then have that $\norm{\Phi^{*}(\vecN)} \leq \norm{\Phi^{*}(\mathbf{L_1})} + \norm{\Phi^{*}(\mathbf{L_2})}$. 
By using Lemma 1.1 of \cite{Candes2010} and denoting $m=\max\set{d,m_{\calX}}$ we first have that:
\begin{equation}
 \norm{\Phi^{*}(\mathbf{L_1})} \leq \frac{2\gamma\sigma}{\epsilon} \sqrt{(1+\delta) m_{\Phi} m_{\calX} m}  \label{eq:bound_L1_noise}
\end{equation}
holds with probability at least $1-2 e^{-c m}$ where $c = \frac{\gamma^2}{2} - 2\log 12$ and $\gamma > 2\sqrt{\log 12}$.
This can be verified using the proof technique of Lemma 1.1 of \cite{Candes2010} by taking care of the fact that the entries
of $\mathbf{L}_1$ are correlated as they are identical copies of the same Gaussian random variable $\frac{1}{\epsilon}\sum_{j=1}^{m_{\calX}} \eta_j$.
Furthermore we also have that:
\begin{equation}
 \norm{\Phi^{*}(\mathbf{L_2})} \leq \frac{2\gamma\sigma}{\epsilon} \sqrt{(1+\delta) m_{\calX} m}  \label{eq:bound_L2_noise}
\end{equation}
holds with probability at least $1-2 e^{-c m}$ with constants $c,\gamma$ as defined earlier. This is verifiable easily using the proof technique
Lemma 1.1 of \cite{Candes2010} as the entries of $\mathbf{L}_2$ are i.i.d Gaussian random variables. Combining \eqref{eq:bound_L1_noise} 
and \eqref{eq:bound_L2_noise} we then have that the following holds true with probability at least $1-4 e^{-c m}$.
\begin{equation}
  \norm{\Phi^{*}(\mathbf{L_1})} + \norm{\Phi^{*}(\mathbf{L_2})} \leq \frac{4\gamma\sigma}{\epsilon} 
	\sqrt{(1+\delta) m_{\calX} m_{\Phi} m}.
\end{equation}
Lastly, it is fairly easy to see that $\norm{\mathatX_{DS}^{(k)} - \matX}_F \leq 2 \norm{\mathatX_{DS} - \matX}_F$ where $\mathatX_{DS}^{(k)}$
is the best rank $k$ approximation to $\mathatX_{DS}$ (see for example the proof of Corollary 1 in \cite{Tyagi2014_acha}).
Combining the above observations we arrive at the stated error bound with probability at least
$1-2 e^{-m_{\Phi}q(\delta) + 4k(d+m_{\calX}+1)u(\delta)} - 4 e^{-c m}$.
\end{proof}
%
\subsection{Proof of Lemma \ref{lem:recov_res_subspace}}
\begin{proof}
Let $\tau$ denote the bound on $\norm{\mathatX_{DS}^{(k)} - \matX}_F$ as stated in Lemma \ref{lem:recov_res_DS}.
We make use of Lemma 2 of \cite{Tyagi2014_acha} which gives us that if $\tau < \frac{\sqrt{(1-\rho)m_{\calX}\alpha k}}{\sqrt{k}+\sqrt{2}}$ holds then
it implies that
\begin{equation}
 \norm{\mathatA^T\mathatA - \matA^T\matA}_F \leq \frac{2\tau}{\sqrt{(1-\rho)m_{\calX}\alpha} - \tau} \label{eq:perturb_bound_subsp}
\end{equation}
holds true for any $0 < \rho < 1$ with probability at least
\begin{equation*}
1 - 2\exp(-m_{\Phi}q(\delta) + 4k(d+m_{\calX}+1)u(\delta)) - 4\exp(-cm) -k\exp\left(-\frac{m_{\calX}\alpha\rho^2}{2k C_2^2}\right).
\end{equation*}
The proof makes use of Weyl's inequality \cite{Weyl1912} and Wedin's perturbation bound \cite{Wedin1972}. Therefore upon using the value of $\tau$
we have that $\tau < f \frac{\sqrt{(1-\rho)m_{\calX}\alpha k}}{\sqrt{k}+\sqrt{2}}$ holds for any $0 < f < 1$ if:
\begin{align}
C_0^{1/2} k^{1/2} (1+\delta)^{1/2} \left(\frac{C_2\epsilon d m_{\calX} k^2}{\sqrt{m_{\Phi}}} + \frac{8\gamma\sigma\sqrt{m_{\calX} m_{\Phi} m}}{\epsilon}\right) 
&< f \frac{\sqrt{(1-\rho)m_{\calX}\alpha k}}{\sqrt{k}+\sqrt{2}} \\
\Leftrightarrow \overbrace{C_2 d k^2}^{a_1} \epsilon \sqrt{\frac{m_{\calX}}{m_{\Phi}}} + \frac{8\gamma\sigma\sqrt{m_{\Phi} m}}{\epsilon}
&< f \left(\overbrace{\frac{1}{C_0^{1/2} (1+\delta)^{1/2}} \frac{\sqrt{(1-\rho)\alpha}}{\sqrt{k}+\sqrt{2}}}^{b_1}\right) \\
\Leftrightarrow a_1 \sqrt{\frac{m_{\calX}}{m_{\Phi}}} \epsilon^2 - f b_1 \epsilon + 8\gamma\sigma\sqrt{m_{\Phi} m} &< 0. \label{eq:eps_ineq}
\end{align}
From \eqref{eq:eps_ineq} we get the stated condition on $\epsilon$. Lastly upon using $\tau < \frac{f\sqrt{(1-\rho)m_{\calX}\alpha k}}{\sqrt{k}+\sqrt{2}}$ 
in \eqref{eq:perturb_bound_subsp} we obtain the stated bound on $\norm{\mathatA^T\mathatA - \matA^T\matA}_F$.
\end{proof}

\end{document}